  \providecommand\BibTeX{{%
    \normalfont B\kern-0.5em{\scshape i\kern-0.25em b}\kern-0.8em\TeX}}}
\newcommand{\Space}[1]{\mathbb{#1}}
\newcommand{\Set}[1]{\mathcal{#1}}
\newcommand{\ie}{\emph{i.e., }}
\newcommand{\eg}{\emph{e.g., }}
\newcommand{\aka}{\emph{aka. }}
\begin{document}

\fancyhead{}

\title{AutoDebias: Learning to Debias for Recommendation}

\author{Jiawei Chen$^{1*}$, Hande Dong$^{1*}$, Yang Qiu$^1$, Xiangnan He$^{1\dagger}$, \\Xin Xin$^2$, Liang Chen$^3$, Guli Lin$^4$, Keping Yang$^4$.}

\affiliation{\institution{$^1$University of Science and Technology of China, $^2$University of Glasgow, 
$^3$Sun Yat-Sen University, $^4$Alibaba Group.}} 

\email{{cjwustc@, donghd@mail., yangqiu@mail., hexn@}ustc.edu.cn} 
\email{x.xin.1@research.gla.ac.uk, chenliang6@mail.sysu.edu.cn, {guli.lingl, shaoyao}@taobao.com.} 

\def\authors{Jiawei Chen, Hande Dong, Yang Qiu, Xiangnan He, Xin Xin, Liang Chen, Guli Lin and Keping Yang}

\thanks{$*$ Jiawei Chen and Hande Dong contribute equally to the work.\\$\dagger$ Xiangnan He is the corresponding author.} 
% \thanks{} 

\renewcommand{\shortauthors}{\authors}

\begin{abstract}
Recommender systems rely on user behavior data like ratings and clicks to build personalization model. However, the collected data is observational rather than experimental, causing various biases in the data which significantly affect the learned model. Most existing work for recommendation debiasing, such as the inverse propensity scoring and imputation approaches, focuses on one or two specific biases, lacking the universal capacity that can account for mixed or even unknown biases in the data. 

Towards this research gap, we first analyze the origin of biases from the perspective of \textit{risk discrepancy} that represents the difference between the expectation empirical risk and the true risk. Remarkably, we derive a general learning framework that well summarizes most existing debiasing strategies by specifying some parameters of the general framework. This provides a valuable opportunity to develop a universal solution for debiasing, e.g., by learning the debiasing parameters from data. However, the training data lacks important signal of how the data is biased and what the unbiased data looks like. To move this idea forward, we propose \textit{AotoDebias} that leverages another (small) set of uniform data to optimize the debiasing parameters by solving the bi-level optimization problem with meta-learning. Through theoretical analyses, we derive the generalization bound for AutoDebias and prove its ability to acquire the appropriate debiasing strategy. Extensive experiments on two real datasets and a simulated dataset demonstrated effectiveness of AutoDebias. 
The code is available at \url{https://github.com/DongHande/AutoDebias}. 

\end{abstract}

%%
%% The code below is generated by the tool at http://dl.acm.org/ccs.cfm.
%% Please copy and paste the code instead of the example below.
%%
\begin{CCSXML}
<ccs2012>
<concept>
<concept_id>10002951.10003317.10003347.10003350</concept_id>
<concept_desc>Information systems~Recommender systems</concept_desc>
<concept_significance>500</concept_significance>
</concept>
</ccs2012>
\end{CCSXML}

\ccsdesc[500]{Information systems~Recommender systems}

%%
%% Keywords. The author(s) should pick words that accurately describe
%% the work being presented. Separate the keywords with commas.
\keywords{Recommendation, Bias, Debias, Meta-learning}

%%
%% This command processes the author and affiliation and title
%% information and builds the first part of the formatted document.
\maketitle
% \begin{spacing}{0.936}
\section{Introduction}
 \allowdisplaybreaks[4]
\label{sec:introduction}

Being able to provide personalized suggestions to each user, recommender systems (RS) have been widely used in countless online applications.
Recent years have witnessed flourishing publications on recommendation, most of which aim at inventing machine learning model to fit user behavior data~\cite{DBLP:conf/sigir/Yuan0KZ20, DBLP:conf/cikm/SunLWPLOJ19, he2020lightgcn}.
However, these models may be deteriorated in real-world RS, as the behavior data is often full of biases.
In practice, the data is observational rather than experimental, and is often affected by many factors,
including but not limited to self-selection of the user (\textit{selection bias})~\cite{DBLP:conf/uai/MarlinZRS07, DBLP:conf/icml/Hernandez-LobatoHG14a},
exposure mechanism of the system (\textit{exposure bias})~\cite{DBLP:conf/sigir/WangBMN16, DBLP:conf/www/OvaisiAZVZ20},
public opinions (\textit{conformity bias})~\cite{DBLP:conf/recsys/LiuCY16,  krishnan2014methodology} and the display position (\textit{position bias})~\cite{DBLP:journals/sigir/JoachimsGPHG17, DBLP:journals/tois/JoachimsGPHRG07}.
These biases make the data deviate from reflecting user true preference.
Hence, blindly fitting data without considering the data biases would yield unexpected results,
\eg amplifying the long-tail effect~\cite{DBLP:conf/recsys/AbdollahpouriBM17} and previous-model bias~\cite{DBLP:conf/sigir/LiuCDHP020}.

%As a result, the data may not reflect user's true preference.
%
%For example, user behavior is based on the premise that an item has been exposed to the user, which is affected by the exposure mechanism of the systems (\aka \textit{exposure bias}) and the self-selection of the user  (\aka \textit{selection bias}). Another typical example is user's conformity to public opinion, making the feedback (\eg rating values) do not always signify user true preference.

% Blindly fitting a recommendation model without considering inherit biases would result in poor performance. Nowadays, bias is ubiquity in recommender system and increasingly becoming an effect bottleneck.

Given the wide existence of data biases and their large impact on the learned model, we cannot emphasize too much the importance of properly debiasing for practical RS.
Existing efforts on recommendation (or learning-to-rank) biases can be divided into three major categories:
1) data imputation~\cite{DBLP:conf/icml/Hernandez-LobatoHG14a, DBLP:conf/recsys/Steck13}, which assigns pseudo-labels for missing data to reduce variance,
2) inverse propensity scoring (IPS)~\cite{DBLP:conf/icml/SchnabelSSCJ16, DBLP:conf/sigir/WangBMN16}, a counterfactual technique that reweighs the collected data for an expectation-unbiased learning, and
3) generative modeling~\cite{DBLP:conf/www/LiangCMB16}, which assumes the generation process of data and reduces the biases accordingly.
Despite their effectiveness in some scenarios, we argue that they suffer from two limitations:
\begin{itemize}[leftmargin=*]
\item \textbf{Lacking Universality}. These methods are designed for addressing one or two biases of a specific scenario,
\eg IPS for selection bias~\cite{DBLP:conf/icml/SchnabelSSCJ16}, click model for position bias~\cite{craswell2008experimental}.
Thus, when facing real data that commonly contains multiple types of biases, these methods will fall short.
\item  \textbf{Lacking Adaptivity}. The effectiveness of these methods is guaranteed only when the debiasing configurations (\eg pseudo-labels, propensity scores, or data-generating process) are properly specified.
However, obtaining such proper configurations is quite difficult, requiring domain expertise that thoroughly understands the biases in the data and how they affect the model.
Even worse, the optimal configurations may evolve with time as new users/items/interactions may change the data distribution,
which has been a nightmare for practitioners to manually tune the configurations continually.
\end{itemize}

Considering the shortcomings of existing work, we believe it is essential to develop a universal debiasing solution, which not only accounts for multiple biases and their combinations, but also frees human efforts to identify biases and tune the configurations.
To achieve this goal, we first review the common biases and debiasing strategies, offering two important insights:
(1) Although different biases have different origins and properties,
they all can be formulated as the \textit{risk discrepancy} between the empirical risk and the true risk, resulting from the inconsistency between the distribution for which the training
data is collected and the one used for unbiased test;
(2) The success of most recent debiasing strategies can be attributed to their specific configurations to offset the discrepancy for model training.
Based on the insights, we propose a general debiasing framework by reducing the risk discrepancy,
which subsumes most debiasing strategies --- each strategy can be recovered by specifying the parameters of the framework.
This framework provides a valuable opportunity to develop a universal debiasing solution for recommendation --- we can perform
automatic debiasing by learning the debiasing parameters of the framework.

Now the question lies in how to optimize the debiasing parameters.
Obviously, the biased training data lacks important signals of how the data is biased and what the unbiased data looks like.
To deal with this problem, we propose to leverage another \textit{uniform data} to supervise the learning of debiasing parameter.
The uniform data is assumed to be collected by a random logging policy~\cite{DBLP:conf/icml/SchnabelSSCJ16}, reflecting user preference in an unbiased way.
We make full use of this important evidence, optimizing the debiasing parameters by minimizing the loss on the uniform data.
Specifically, we formulate the process as a bi-level optimization problem, where the debiasing parameters serve as the hyper-parameters for learning the recommender model,
and optimize debiasing parameters by meta-learning technique~\cite{DBLP:conf/icml/FinnAL17}.
% It would be straightforward to take this evidence into account and optimize debiasing parameters towards this objective.
We conduct theoretical analyses on the learning framework, proving that:
(1) the optimum learned under such objective is approximate to the best case where biases are properly corrected;
(2) it is able to learn a satisfactory debiasing strategy even if it is trained on a small uniform data.

Lastly, in terms of leveraging uniform data for recommendation, the most relevant work is the recently proposed KDRec~\cite{DBLP:conf/sigir/LiuCDHP020}.
However, we argue that it does not sufficiently exploit the merits of uniform data.
KDRec trains a separate teacher model on the uniform data, and then transfers the model's knowledge to the normal training on biased data. Since uniform data is collected at the expense of degrading user experience, its size is usually rather small. As such, the model trained on it suffers from high variance, decreasing the effectiveness of KDRec.
What's more, it lacks theoretical guarantees, and the inherent mechanism of how teacher model offsets the bias is not entirely understood.
Compared to KDRec, our framework utilizes uniform data in a more theoretically sound way and yields significant empirical improvements.
% we directly leverage uniform data towards learning a better debiasing strategy, which empirically yields better performance than both KDRec and CausalE.

%They lack an explicit optimization towards better debiased performance.
%
%a very recent work KDRec is flexible to handle various biases, it requires to train a complex teacher model on the small uniform data, which would render high variance and may not extract unbiased information. What's critical, KDRec is rather heuristical. Its inherent debiasing mechanism is not understood and lacks theoretical guaranteed. Distinct from KDRec, we leverage uniform data in a more efficient way --- directly supervising the selection of the debiasing strategy, which empirically yields better performance than KDRec.

In a nutshell, this work makes the following main contributions:

\begin{itemize}
\item Unifying various biases from the \textit{risk discrepancy} perspective and developing a general debiasing framework that subsumes most debiasing strategies.
\item Proposing a new method that leverages uniform data to learn optimal debiasing strategy with theoretical guarantees.
\item Conducting experiments on three types of data (explicit and implicit feedback, and simulated data of list feedback) to validate the effectiveness of our proposal.
\end{itemize}

\section{Preliminary}

%first to say bias
%
%then we present debiasing methods and generic family of XX

In this section, we formulate the recommendation task and review various biases in it from \textit{risk discrepancy} perspective.

%Before going deep into our framework, we would like to review various biases in recommendation and
%In this section, we will first review the bias problem in recommendation.
%Then we enumerate different biases and unify the forms of dealing with different biases.

\subsection{Task Formulation}

\begin{table}[t!]
    \centering
    \setlength{\abovecaptionskip}{0.2cm}
 \setlength{\belowcaptionskip}{-0.00cm}
    \caption{Notations and Definitions.}
    \resizebox{.48\textwidth}{!}{%
    \begin{tabular}{|c|c|ccc|}
    \hline
    Notations           &   Annotations     \\
    \hline
    $D_T$& \begin{tabular}[c]{@{}c@{}}a biased training set with entries $\{(u_k,i_k,r_k)\}_{1\le k \le |D_T|}$ \\ collected from user interaction history\end{tabular} \\ \hline
    $D_U$ & \begin{tabular}[c]{@{}c@{}}a unbiased uniform set with entries $\{(u_l,i_l,r_l)\}_{1\le l \le |D_U|}$ \\ collected with uniform logging policy \end{tabular} \\ \hline
    $p_T(u,i,r)$& the data distribution for which $D_T$ collected  \\ \hline
    $p_U(u,i,r)$& the ideal unbiased data distribution\\ \hline
    $f_{\theta}(.,.)$ & RS model that maps a user-item pair into the prediction \\ \hline
    $\hat L_T(f)$& the empirical risk of $f$ on $D_T$\\ \hline
    $L(f)$& the true risk of $f$\\ \hline
    $S_1$&   \begin{tabular}[c]{@{}c@{}} the subspace of $\Set U\times \Set I \times \Set R$ with constraints: \\ ${p_T}(u,i,k) > 0, {p_U}(u,i,k) > 0$  \end{tabular}\\ \hline
    $S_0$& \begin{tabular}[c]{@{}c@{}} the subspace of $\Set U\times \Set I \times \Set R$ with constraints:  \\ ${p_T}(u,i,k) = 0, {p_U}(u,i,k) > 0$\end{tabular} \\
    \hline
    \end{tabular}%
    }
     \vspace{-0.3cm}
    \label{notation}
\end{table}

Suppose we have a recommender system with a user set $\Set U$ and an item set $\Set I$. Let $u$ (or $i$) denotes a user (or an item) in $\Set U$ (or $\Set I$).
Let $r \in \Set R$ denotes the feedback (\eg rating values, clicks, and retention time) given by a user to an item.
The collected history behavior data $D_T$ can be notated as a set of triplets $\{(u_k,i_k,r_k)\}_{1\le k \le |D_T|}$ generated from an unknown distribution $p_T(u,i,r)$ over user-item-label space $\Set U\times \Set I \times \Set R$. The task of a recommendation system can be stated as follows: learning a recommendation model from $D_T$ so that it can capture user preference and make a high-quality recommendation. Formally, let $\delta (.,.)$ denotes the error function between the prediction and the ground truth label.
The goal of recommendation is to learn a parametric function $f_{\theta}: \Set U\times \Set I \to \Set R$ from the available dataset $D_T$ to minimize the following \textit{true risk}:
\begin{equation}
    \label{risk_equ}
    \begin{split}
        L(f)=\mathbb E_{p_U(u,i,r)}[\delta (f(u,i),r)], \\
    \end{split}
\end{equation}
where $f_{\theta}$ can be implemented by a specific recommendation model with a learnable parameters $\theta$. We remark that in this paper we may simply omit subscript '$\theta$' in the notations for clear presentation. $p_U(u,i,r)$ denotes the ideal unbiased data distribution for model testing. This distribution can be factorized as the product of the user-item pair distribution $p_U(u,i)$ (often supposed as uniform) and the factual preference distribution for each user-item pair $p_U(r|u,i)$.

Since the true risk is not accessible, the learning is conducted on the training set $D_T$ by optimizing the following \textit{empirical risk}:
\begin{equation}
    \label{em_risk_equ}
    \begin{split}
        \hat L_T(f)=\frac{1}{|D_T|}\sum_{k=1}^{|D_T|}\delta (f(u_k,i_k),r_{k}). \\
    \end{split}
\end{equation}
If the empirical risk $\hat L_T(f)$ is an unbiased estimator of the true risk $L(f)$, \ie $\mathbb E_{p_T}[L_T(f)]=L(f)$,
the PAC learning theory~\cite{haussler1990probably} states that the learned model will be approximately optimal if we have sufficiently large training data.

\subsection{Biases in Recommendation}
\label{Statistical perspective}
However, as various biases occur in real-world data collection,
the training data distribution $p_T$ is often inconsistent with the ideal unbiased distribution $p_U$.
Training data only gives a skewed snapshot of user preference, making the recommendation model sink into sub-optimal result.
Figure \ref{fg:gap} illustrates this phenomenon.
The red curve denotes the true risk function, while the blue curve denotes the expectation of the biased empirical risk function $\mathbb E_{p_T}[L_T(f)]$.
As the two risks are expected over different distribution, they will behave rather differently even in their optimum (\ie $f^*$ versus $f^T$).
It means that even if a sufficiently large training set is provided and the model arrives at empirical optimal point $f^T$,
there still exists a gap $\Delta L$ between the optimum $L(f^*)$ and the empirical one $L(f^T)$.
Above analysis reveals the impact of bias on recommendation: bias will incur the discrepancy between the true risk and the expected empirical risk.
Blindly fitting a recommendation model without considering the risk discrepancy will result in inferior performance.

\begin{figure}[t!]
\centering
\includegraphics[width=0.35\textwidth]{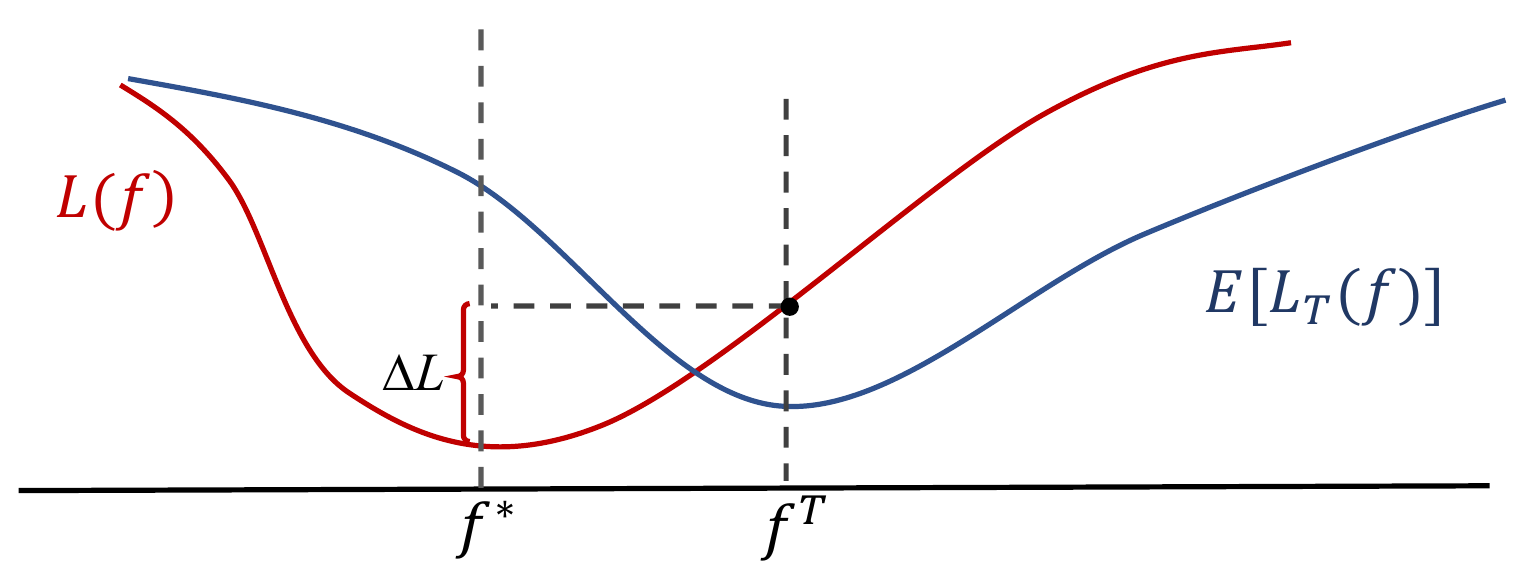}
\setlength{\abovecaptionskip}{0.2cm}
 \setlength{\belowcaptionskip}{-0.00cm}
 \caption{An example of the biased empirical risk.}
  \vspace{-0.3cm}
\label{fg:gap}
\end{figure}

Recent work has identified various biases in recommendation.
In this subsection, we will review these biases from risk discrepancy perspective to help the readers to better understand their properties and negative effect.
We refer to~\cite{DBLP:journals/corr/abs-2010-03240} and categorize data biases into the following four classes:

%Note that the training data distribution $p_T(u,i,r)$ can be decomposed as the product of the user-item pair distribution $p_T(u,i)$ and the label distribution for each pair $p_T(r|u,i)$.

\textbf{Selection bias} \textit{happens as users are free to choose which items to rate,
so that the observed ratings are not a representative sample of all ratings}~\cite{DBLP:journals/corr/abs-2010-03240}.
Selection bias can be easily understood from the \textit{risk discrepancy} perspective --- it skews the user-item pair distribution $p_T(u,i)$ from the ideal uniform one $p_U(u,i)$. Typically, $p_T(u,i)$ inclines to the pairs with high rating values. Learning a recommendation model under such biased data will easily overestimate the preferences of users to items.

%user-item pair distribution $p_T(u,i)$ from the ideal uniform one $p(u,i)$, so that the collected data is not a representative sample of all user-item pairs. Typically, in a rating system, selection bias happens as users are free to choose which items to rate. $p_T(u,i)$ will incline to the pair with high rating values. Learning a recommendation model under such biased training set will easily over-estimate user preference on the items.

\textbf{Conformity bias} \textit{happens as users tend to behave similarly to the others in a group,
even if doing so goes against their own judgment}~\cite{DBLP:journals/corr/abs-2010-03240}.
Conformity bias distorts label distribution $p_T(r|u,i)$ in conformity to the public opinions, making the feedback do not always signify user true preference, \ie $p_T(r|u,i) \ne p_U(r|u,i)$.

\textbf{Exposure bias} \textit{happens in implicit feedback data as users are only exposed to a part of specific items}~\cite{DBLP:journals/corr/abs-2010-03240}.
It would be difficult to understand exposure bias from the above definition, but straightforward from risk discrepancy perspective.
On the one hand, users generate behaviors on exposed items, making the observed user-item distribution $p_T(u,i)$ deviate from the ideal one $p_U(u,i)$.
On the other hand, implicit feedback data only has positive feedback observed $p_T(r=1|u,i)=1$.
Such positive-only data will cause ambiguity in the interpretation of unobserved interactions --- they may caused by non-exposure or by dislike.

%it happens in implicit feedback data, where only positive feedback is observed $p_T(r=1)=1$; On the other hand, . Exposure bias makes recommendation rather challenging. We lacks observed negative feedback to learn the model. We may obtain negative signal from the unobserved interactions, but they do not always represent negative preference.

%Our mathematical definition is that the exposure bias environment (previous RS displayed a part of items to users) disturbs the feature distribution $P(u,i)$ to $P(u,i|e)$.

\textbf{Position bias} \textit{happens as users tend to interact with items in higher position of the recommendation list}~\cite{DBLP:journals/corr/abs-2010-03240}.
Under position bias, training data distribution $p_T(u,i,r)$ will be sensitive to the item display position and fail to reflect user preference faithfully.
On the one hand, the ranking position will affect the chance of the item exposure to the user~\cite{DBLP:conf/www/OvaisiAZVZ20}, \ie $p_T(u,i) \ne p_U(u,i)$.
On the other hand, as users often trust the recommendation system, their judgments also will be affected by the position, \ie $p_T(r|u,i) \ne p_U(r|u,i)$.

Generally speaking, above biases can be summarized as a type of risk discrepancy  --- they cause training data distribution $p_T(u,i,r)$ deviate from the ideal unbiased one $p_U(u,i,r)$. This insight motivates us to develop a powerful framework that directly conquers the risk discrepancy, enabling the elimination of the mixture of above biases or even unknown biases in the data.

\section{A general debiasing framework}
In this section, we provide a general debiasing framework that can account for various kinds of biases in recommendation data. We then discuss how it subsumes most existing debiasing strategies.

\subsection{Debiasing Empirical Risk}
\begin{figure}[t!]
    \centering
    \includegraphics[width=0.49\textwidth]{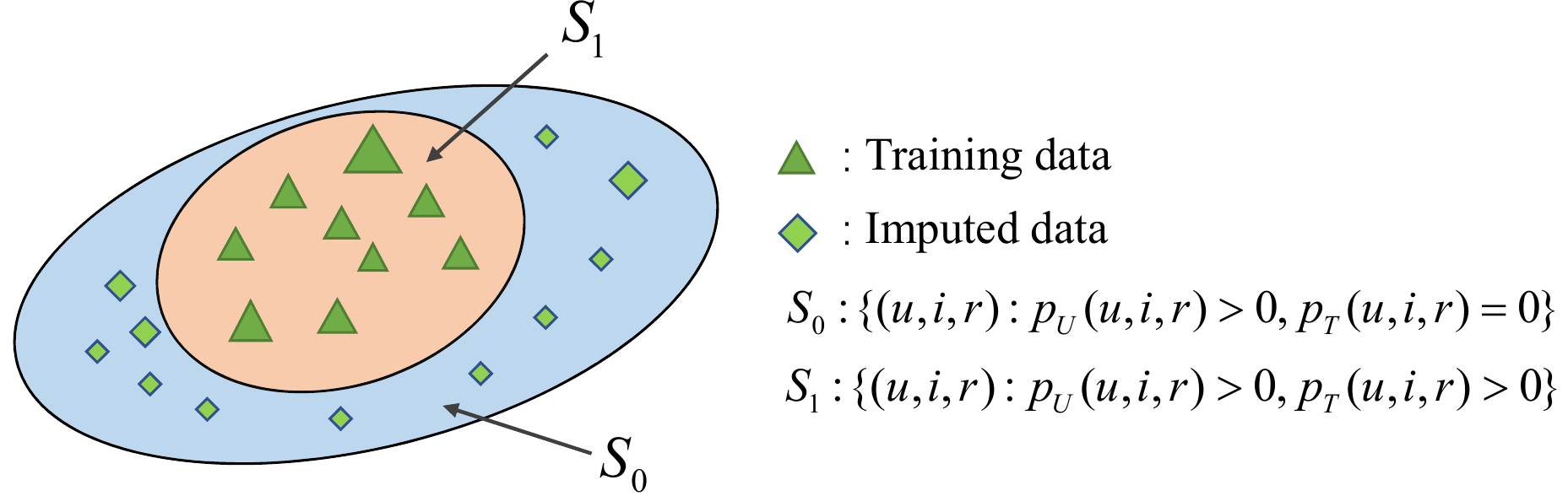}
    \setlength{\abovecaptionskip}{0.2cm}
 \setlength{\belowcaptionskip}{-0.00cm}
    \caption{An illustration of the fact that training data distribution only covers part of ideal data distribution. We need impute pseudo-data (marked as diamond) to the blank region $S_0$, where the mark size reflects their weights. }
     \vspace{-0.3cm}
    \label{fg:shiyi}
\end{figure}

The analyses presented in Section~\ref{Statistical perspective} show that various biases account for the risk discrepancy.
For unbiased learning, we need to re-design the empirical risk function so that its expectation under biased training distribution is consistent with the true risk.
By comparing $L(f)$ with $\mathbb E_{p_T}[\hat L_T(f)]$,
we can see that the discrepancy origins from the data distribution, where the effect of each data has been skewed in empirical risk.
To deal with it, we can re-weigh training data and obtain a reweighted empirical risk function:
\begin{equation}
    \begin{split}
{{\hat L}_T}(f|{w^{(1)}}) = \frac{1}{|D_T|}\sum\limits_{k=1}^{|D_T|} {w_{k}^{(1)}L(f({u_k},{i_k}),{r_k})}.
  \end{split}
\end{equation}
When the parameter $w^{(1)}$ is properly specified, \ie $w_k^{(1)} = \frac{{p_U({u_k},{i_k},{r_k})}}{{{p_T}({u_k},{i_k},{r_k})}}$,
the skewness of each sampled training data is corrected. Such empirical risk ${{\hat L}_T}(f|{w^{(1)}})$ seems be an unbiased estimation of the true risk as:
\begin{equation}
    \begin{split}
{E_{{p_T}}}[{{\hat L}_T}(f|w^{(1)})] = \sum\limits_{(u,i,r) \in S_1 } {p_U(u,i,r)\delta (f(u,i),r)},
  \end{split}
\end{equation}
where $S_1$ denotes the user-item-label subspace satisfying ${p_T}(u,i,r) > 0, {p_U}(u,i,r) > 0$, \ie $S_1\equiv \{(u,i,r) \in \Set U \times \Set I \times \Set R:{p_T}(u,i,r) > 0,{p_U}(u,i,r) > 0\}$.
However, the equivalence of ${\mathbb E_{{p_T}}}[{{\hat L}_T(f|w^{(1)})}]$ with $L(f)$ is actually not held.
As illustrated in Figure~\ref{fg:shiyi}, the training data distribution $p_T$ only covers a part of regions (\ie $S_1$) of the support of $p_U$,
while does not have probability in other regions ($S_0\equiv \{(u,i,r) \in U \times I \times R:{p_T}(u,i,k)=0, {p_U}(u,i,k) > 0\}$).
It means that even if a sufficiently large training data is available,
it still provides partial user-item pairs only, while leaves the rest blank.
Learning on $S_1$ only will suffer especially when $S_0$ and $S_1$ exhibit different patterns.
This situation is common in practice: the system tends to display popular items~\cite{DBLP:conf/recsys/KamishimaAAS14},
while a considerable number of unpopular items have little chance to be exposed.
% Learning a recommendation model just on the observed data will yield serious problems,
% \eg deteriorating recommendation performance, amplifying long-tail effect.
To deal with this problem, we need to impute pseudo-data to the blank region. In fact, the equivalent transformation of $L(f)$ is:
\begin{equation}
    \begin{split}
&L(f)= \smashoperator{\sum\limits_{u\in \Set U,i\in \Set I,r \in \Set R}} {p_U(u,i,r)\delta (f(u,i),r)} \\
 &=\smashoperator{\sum\limits_{(u,i,r)\in S_1 }} {p_U(u,i,r)\delta (f(u,i),r)} + \smashoperator{\sum\limits_{ (u,i,r)\in S_0 }} {p_U(u,i,r)\delta (f(u,i),r)} \\
%  &= {E_{{p_T}}}[\frac{1}{|D_T|}\sum\limits_{k=1}^{|D_T|} {w_k^{(1)}\delta (f({u_k},{i_k}),{r_k})} \\
%  & + {\sum\limits_{u\in \Set U,i\in \Set I,r \in \Set R}} {p_U(u,i,r)\mathbf I[{p_T}(u,i,r) = 0]\delta (f(u,i),r)} ]\\
 &= {E_{{p_T}}}[\frac{1}{|D_T|}\smashoperator{\sum\limits_{k=1}^{|D_T|}} {w_k^{(1)}\delta (f({u_k},{i_k}),{r_k})}] + \smashoperator{\sum\limits_{u \in \Set U,i \in \Set I}} {w_{ui}^{(2)}\delta (f(u,i),{m_{ui}})}.
 \end{split}
 \label{eq:lf}
\end{equation}
The last equation holds when the parameters $\phi \equiv \{{w^{(1)}},{w^{(2)}},m\}$ are set as:
 \begin{equation}
    \begin{split}
w_k^{(1)} &= \frac{{p_U({u_k},{i_k},{r_k})}}{{{p_T}({u_k},{i_k},{r_k})}}\\
w_{ui}^{(2)} &= \sum\limits_{r\in \Set R} {p_U(u,i,r)\mathbf I[{p_T}(u,i,r) = 0]} \\
{m_{ui}} &= {E_{p_U(r|u,i)}}[r\mathbf I[{p_T}(u,i,r) = 0]],
 \end{split}
 \label{eq:para}
\end{equation}
where $\mathbf I[.]$ denotes indicator function.
Here we absorb the expectation over $\delta (.,.)$ into the pseudo-label, \ie $\mathbb E[\delta (.,.)]=\delta (.,\mathbb E[.])$.
It holds for many commonly-used loss functions, such as L2, L1 and cross-entropy.
Thus, we define the following empirical risk function as:
 \begin{equation}
    \begin{split}
{{\hat L}_T}(f|\phi) = \frac{1}{|D_T|}\sum\limits_{k=1}^{|D_T|} {w_k^{(1)}\delta (f({u_k},{i_k}),{r_k})}  + \smashoperator{\sum\limits_{u \in \Set U,i \in \Set I}} {w_{ui}^{(2)}\delta (f(u,i),{m_{ui}})},
\end{split}
\label{training_loss}
\end{equation}
which is an unbiased estimator of the true risk when the parameters are properly specified.
We remark that there may exist multiple solutions of $\phi$ but at least one (\ie Equation (\ref{eq:para})) that makes equation $\hat L_T(f|\phi)=L(f)$ hold.
This generic empirical risk function provides a valuable opportunity to develop a universal solution --- achieving automatic debiasing by learning the debiasing parameters $\phi$.

\subsection{Link to Related Work}
To show the university of our framework, we review representative debiasing strategies and discuss how the framework subsumes them.
% As most existing strategies are designed for addressing one or two specific biases, we review these strategies according to which biases they addressed.

\vspace{+5pt}
\noindent\textbf{Selection bias.} Existing methods are mainly three types:

(1) Inverse Propensity Score (IPS)~\cite{DBLP:conf/icml/SchnabelSSCJ16} reweighs the collected data for an unbiased learning,
defining the empirical risk as:
% whose empirical risk is defined as follow:
\begin{equation}
     \mathcal{\hat L}_{IPS}(f) = \frac{1}{|\Set U|| \Set I|}\sum\limits_{k=1}^{|D_T|} \frac{1}{q_{u_ki_k}} \delta(f(u_k,i_k),r_k),
    \label{IPS_selection_bias}
\end{equation}
where $q_{u_ki_k}$ is defined as propensity, which estimates the probability of the data to be observed.
The framework recovers it by setting $w^{(1)}_k=\frac{|D_T|}{q_{u_ki_k}|\Set U|| \Set I|}$, $w^{(2)}_k=0$.

 (2) Data Imputation~\cite{DBLP:conf/icml/Hernandez-LobatoHG14a} assigns pseudo-labels for missing data and optimizes the following risk function:
\begin{equation}
 {{ \hat L}_{IM}}(f) = \frac{1}{|\Set U| |\Set I|}(\sum\limits_{k=1}^{|D_T|} {\delta (f({u_k},{i_k}),{r_k})}  + \smashoperator{\sum\limits_{u \in \Set U,i \in \Set I}} \lambda \delta (f(u,i),m_{ui})),
\end{equation}
where $m_{ui}$ denotes the imputed labels which can be specified heuristically~\cite{DBLP:conf/icdm/HuKV08} or inferred by a dedicated model~\cite{DBLP:conf/recsys/MarlinZ09, saito2020asymmetric,DBLP:conf/icdm/Chen0ESFC18}.
$\lambda$ is to control the contribution of the imputed data. It is a special case of our framework if we set $w^{(1)}_k=\frac{|D_T|}{|\Set U| |\Set I|}, w^{(2)}_{ui}=\frac{\lambda}{|\Set U| |\Set I|}$.

(3) Doubly Robust~\cite{wang2019doubly} combines the above models for more robustness --- the capability to remain unbiased if either the imputed data or propensities are accurate. It optimizes:
\begin{equation}
{ { \hat L}_{DR}}(f) = \frac{1}{{|\Set U||\Set I|}}\sum\limits_{u \in |\Set U|,i \in |\Set I|} {\left( {\delta (f(u,i),m_{ui}) + \frac{{{O_{ui}}{d_{ui}}}}{{{q_{ui}}}}} \right)},
\end{equation}
where ${d_{ui}} = \delta (f(u,i),r_{ui}^o) - \delta (f(u,i),r_{ui}^i)$ denotes the difference between the predicted error and imputed error.
$r_{ui}^o$ denotes the observed rating values. $O_{ui}$ denotes whether the interaction of $(u,i)$ is observed.
Our framework can recover this one by setting $w^{(1)}_k=\frac{|D_T|}{q_{u_ki_k}|\Set U||\Set I|}$, $w^{(2)}_{ui}=\frac{1}{|\Set U||\Set I|}-\frac{O_{ui}}{q_{ui}|\Set U||\Set I|}$.

\vspace{+5pt}
\noindent\textbf{Conformity bias.} The conformity effect can be offset by optimizing~\cite{DBLP:conf/recsys/LiuCY16, DBLP:conf/sigir/MaKL09}:
\begin{equation}
{\hat L_{OFF}}(f) = \frac{1}{|D_T|}\sum\limits_{k=1}^{|D_T|} {{{(\alpha {r_k} + (1 - \alpha ){b_{{u_k}{i_k}}} - f({u_k},{i_k}))}^2}},
\end{equation}
where ${b_{{u_k}{i_k}}}$ denotes the introduced bias term,
which can be specified as the average rating over all users\cite{DBLP:conf/recsys/LiuCY16} or social friends~\cite{DBLP:conf/sigir/MaKL09}.
$\alpha$ controls the effect of conformity. Our framework subsumes it by setting $w^{(1)}_k = 1, w^{(2)}_{ui} = O_{ui}(1-\lambda), m_{ui} =  - {b_{ui}}$.

\vspace{+5pt}
\noindent\textbf{Exposure bias.} Existing methods are mainly two types:

(1) Negative Weighting, treats unobserved interactions as negative and down-weighs their contributions~\cite{DBLP:conf/icdm/HuKV08}:
\begin{equation}
{{\hat L}_{NW}}(f) =  \frac{1}{|D_T|}\sum\limits_{k=1}^{|D_T|} {\delta (f({u_k},{i_k}),{r_k})}  + \smashoperator{\sum\limits_{(u,i)\in \Set U\times \Set I:O_{ui}=0}} {{a_{ui}}\delta (f(u,i),0)},
\end{equation}
where parameter $a_{ui}$ indicates how likely the item is exposed to a user,
which can be specified heuristically~\cite{DBLP:conf/icdm/HuKV08} or by an exposure model~\cite{DBLP:conf/www/LiangCMB16,chen2018modeling,chen2019samwalker}.
We can recover this method by setting $w^{(1)}_k=1$, $w^{(2)}_{ui}=a_{ui}(1-O_{ui})$, $m_{ui}=0$.

(2) IPS Variant, reweighs observed data and imputes missing data~\cite{saito2020unbiased}:
\begin{equation}
{\hat L_{IPSV}}(f) = \smashoperator{\sum\limits_{k=1}^{|D_T|}} {\frac{1}{{{{q}_{u_ki_k}}}}\delta (f({u_k},{i_k}),{r_k})}  + \smashoperator{\sum\limits_{u\in \Set U,i \in \Set I}} {(1 - \frac{{{O_{ui}}}}{{{{q}_{ui}}}})\delta (f(u,i),0)}.
\end{equation}
We can recover it by setting $w^{(1)}_k=\frac{|D_T|}{\hat q_{u_ki_k}|\Set U||\Set I|}$, $w^{(2)}_{ui}=\frac{1}{|\Set U||\Set I|}-\frac{O_{ui}}{q_{ui}|\Set U||\Set I|}$, $m_{ui}=0$.
It is worth mentioning that the unbiasedness of Variant IPS conditions on that the training distribution $p_T$ covers the whole support of $p_U$,
however it seldom holds in practice.

\vspace{+5pt}
\noindent\textbf{Position bias.} The most popular strategy is IPS~\cite{DBLP:conf/sigir/AiBLGC18}, which reweighs the collected data with a position-aware score ${\hat q}_{t_k}$:
\begin{equation}
{\hat L_{IPS}}(f) = \frac{1}{|D_T|}\sum\limits_{k=1}^{|D_T|} {\frac{{\delta (f({u_k},{i_k}),{r_k})}}{{{{ q}_{t_k}}}}}.
\end{equation}
It can be recovered by setting $w^{(1)}_k=\frac{1}{{{{ q}_{t_k}}}}$, $w^{(2)}_{ui}=0$.

\section{AutoDebias Algorithm}
We now consider how to optimize the aforementioned framework.
Since the training data lacks important signals of how the data is biased and what the unbiased data looks like, it is impossible to learn proper debiasing parameters from such data.
To deal with this problem, another \textit{uniform data} $D_U$ is introduced to supervise the learning of debiasing parameters. The uniform data contains a list of triplets $\{(u_l,i_l,r_l)\}_{1\le l \le |D_U|}$, which is assumed to be collected by a random logging policy, providing a gold standard evidence on the unbiased recommendation performance. We make full use of this evidence and optimize $\phi$ towards better performance on uniform data. Specifically, the learning process can be formulated as a \textit{meta learning} process with:

\textit{Base learner}: The base recommendation model is optimized on the training data with current debiasing parameters $\phi$:
\begin{equation}
{\theta ^*(\phi)} = \mathop {\arg \min }\limits_\theta  {{\hat L}_T}({f_\theta }|\phi ) \label{eq:base}.
\end{equation}
where debiasing parameters $\phi$ can be seen as the hyper-parameters of the base learner.

\textit{Meta leaner}: Given the learned base recommendation model $\theta^*(\phi)$ from training data with the hyper-parameters $\phi$, $\phi$ is optimized towards better recommendation performance on uniform data:
\begin{equation}
{\phi ^*} = \mathop {\arg \min }\limits_\phi \frac{1}{{|{D_U}|}}\sum\limits_{l = 1}^{|{D_U}|} {\delta ({f_{{\theta ^*(\phi)}}}({u_l},{i_l}),{r_l})}. \label{eq:meta}
\end{equation}
For better description, the empirical risk on uniform data is marked as ${{\hat L}_U}({f_\theta })$, \ie $ {{\hat L}_U}({f_\theta })= \frac{1}{{|{D_U}|}}\sum_{l = 1}^{|{D_U}|} {\delta ({f_\theta }({u_l},{i_l}),{r_l})}$.

As the uniform data is often collected in a small scale, directly learning all parameters in $\phi$ from it will incur over-fitting. To deal with this problem, $\phi$ can be re-parameterized with a concise \textit{meta model}. This treatment can reduce the number of parameters and encode useful information (e.g., user id, observed feedback) into debiasing. In this paper, we simply choose a linear model for implementation as:
\begin{equation}
    \begin{split}
w_k^{(1)} &= \exp (\varphi _1^T[{\mathbf x_{{u_k}}} \circ {\mathbf x_{{i_k}}} \circ {\mathbf e_{r_k}}])\\
w_{ui}^{(2)} &= \exp (\varphi _2^T[{\mathbf x_u} \circ {\mathbf x_i} \circ {\mathbf e_{O_{ui}}}])\\
{m_{ui}} &= \sigma (\varphi _3^T[{\mathbf e_{r_{ui}}} \circ {\mathbf e_{O_{ui}}}]),
    \end{split}
    \label{eq:li}
\end{equation}
where $\mathbf{x}_u$ \text{and} $\mathbf{x}_i$ denote the feature vectors (\eg one-hot vector of its id) of user $u$ and item $i$, respectively; $\mathbf e_r$, $\mathbf e_{O_{ui}}$ are one-hot vectors of $r$ and $O_{ui}$; the mark $\circ$ denotes the concatenation operation; $\varphi  \equiv \{ {\varphi _1},{\varphi _2},{\varphi _3}\}$ are surrogate parameters to be learned; $\sigma(.)$ denotes the activated function controlling the scale of the imputation values, \eg $tanh(\cdot)$.
One might concern that modeling $\phi$ with a meta model potentially induce inductive bias, restricting the flexibility of $\phi$ and making it fail to arrive at the global optimum.
In fact, our framework is relatively robust to such inductive bias, which has been validated in Section 5.

\textbf{Model learning.}
% It is expensive to obtain optimal parameters $\phi$ involving nested loops of optimization --- updating $\phi$ a step forward requires a loop of full training of $\theta$.
Note that obtaining optimal $\phi^*$ involves nested loops of optimization --- updating $\phi$ a step forward requires a loop of full training of $\theta$, which is expensive.
To deal with this problem, we consider to update $\theta$ and $\phi$ alternately in a loop.
That is, for each training iteration, we make a tentative updating of recommendation model with current $\phi$ and inspect its performance on the uniform data. The loss on uniform data will give feedback signal to update meta model. To be exact, as illustrated in Figure \ref{Meta_model}, we perform the following training procedure in each iteration:
\begin{itemize}[leftmargin=*]
\item \textbf{Assumed update of $\theta$.} As the black arrows in Figure 3, we make an assumed updating of $\theta$:
\begin{equation}
    \begin{split}
        \theta'(\phi) = \theta - \eta_1  {\nabla _\theta }{{\hat L}_T}({f_\theta }|\phi ),
    \end{split}
\end{equation}
where we update $\theta$ using gradient descent with learning rate $\eta_1$ .

\item \textbf{Update of $\phi$ ($\varphi$).} As the blue arrows shown in figure 3, we test $\theta'(\phi)$ on the uniform data with $\hat L_U$. The loss function gives a feedback signal (gradient) to update the meta model $\varphi$:
    \begin{equation}
    \begin{split}
{\varphi} &\leftarrow {\varphi - \eta_2 {\nabla _\varphi } {{\hat L}_U}({f_{\theta'(\phi)} }) }.
      \end{split}
\end{equation}
 The gradient can be calculated by using the back-propagation along the chain
     ${{\hat L}_U}({f_{\theta'(\phi)} }) \to \theta'(\phi) \to {\nabla _\theta }{{\hat L}_T}({f_\theta }|\phi )) \to \phi \to \varphi$.

\item \textbf{Update of $\theta$.} Given the updated $\phi$, we update $\theta$ actually:
\begin{equation}
    \begin{split}
        \theta \leftarrow \theta - \eta_1  {\nabla _\theta }{{\hat L}_T}({f_\theta }|\phi )).
    \end{split}
\end{equation}
\end{itemize}

\begin{figure}[t!]
    \centering
    \includegraphics[width=0.35\textwidth]{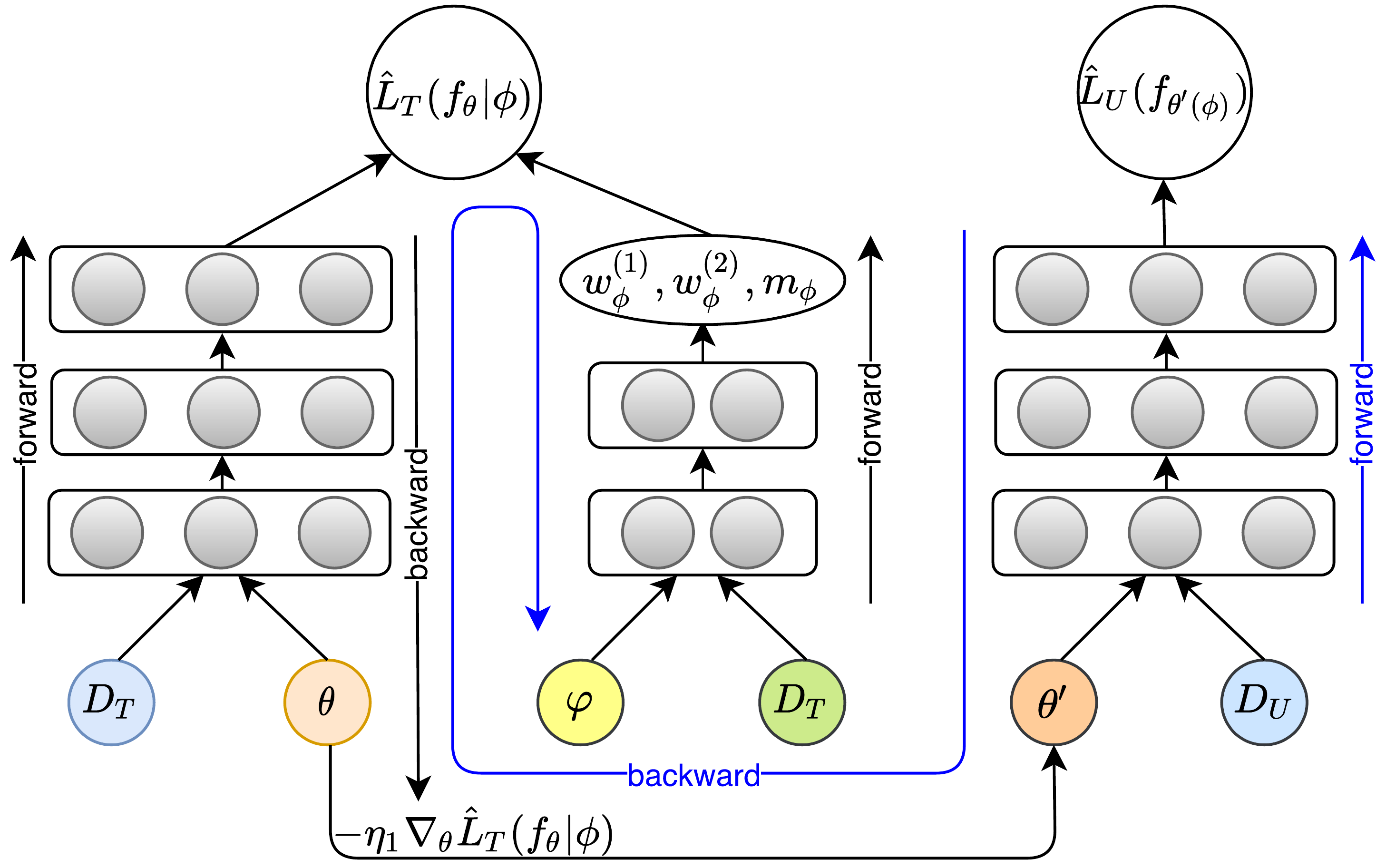}
    \caption{The working flow of AutoDebias, consists of three steps:  (1) tentatively updating $\theta$ to $\theta'$ on the training data $D_T$ with current $\phi$ (black arrows); (2) updating $\phi$ based on $\theta'$ on the uniform data (blue arrows); (3) actually updating $\theta$ with the updated $\phi$ (black arrows).}
    \vspace{-0.3cm}
    \label{Meta_model}
\end{figure}

While this alternative optimization strategy is not guaranteed to find the global optimum, it empirically works well for bi-level optimization problem~\cite{DBLP:conf/icml/FinnAL17}.

\section{Theoretical analysis}

In this section, we conduct theoretical analyses to answer the following questions:
(1) Can AutoDebias acquire the optimal $\phi$ making $E_{p_{T}}[\hat L_T(f)]$ consistent with $L(f)$?
(2) How does inductive bias in meta model affect recommendation performance?

\textbf{Addressing question (1).} We first give some notations.
Let $\Phi$ and $\Space F$ denote the hypothesis space of the meta model and the recommendation model;
$f^*$ denotes the optimum function that minimizes $L(f)$;
$\hat f(\phi)$ denotes empirical optimum under specific $\phi$ by optimizing \ie  $\hat f(\phi ) = \arg {\min _{f \in \Space F}}{{\hat L}_T}(f|\phi )$;
$\phi^*$, the optimal debiasing parameters making $E_{p_T}[L_T(f|\phi^*)]=E[L(f)]$ hold, is supposed to be contained in $\Phi$ in this question, then the general case will be discussed in next question.
${\phi ^o}$ denotes surrogate optimum from $\Phi$ that minimizes ${\phi ^o}= {\arg \min }_{\phi \in \Phi} L(\hat f(\phi ))$;
$\hat \phi$ denotes the empirical $\phi$ from the empirical risk on uniform data $L_U(\hat f(\phi ))$.
As the empirical risk $\hat L_T(f|\phi^*)$ is an unbiased estimation of the true risk, we have:
\newtheorem{lem}{Lemma}
\begin{lem}
\label{la1}
\textbf{Generalization bound of $L(\hat f({\phi ^*}) )$.} For any finite hypothesis space of recommendation models $\Space F=\{f_1, f_2, \cdots , f_{|\Space F|}\}$, with probability of at least $1-\eta$, we have the following generalization error bound of the learned model $\hat f$:
\begin{align}
L(\hat f({\phi ^*})) \le L({f^*}) + \sqrt {\frac{{{S_{{w^{(1)}}}}}\rho^2}{{2|{D_T}|}}\log \frac{{2|\Space F|}}{\eta }},
\end{align}
where ${{S_{{w^{(1)}}}}}$ denotes mean-square of $w^{(1)}$ with ${S_{{w^{(1)}}}} = \sum\limits_{k = 1}^{|{D_T}|} {{{(w_k^{(1)})}^2}}$. $\rho$ denotes the bound of loss function $\delta$.
\end{lem}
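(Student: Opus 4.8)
The plan is to combine a uniform-convergence (PAC) argument over the finite hypothesis space $\Space F$ with the standard empirical-risk-minimization decomposition. First I would invoke the fact, already granted in the text, that $\hat L_T(f|\phi^*)$ is an unbiased estimator of the true risk, so that $E_{p_T}[\hat L_T(f|\phi^*)] = L(f)$ for every fixed $f$. The crucial structural observation is that the imputation term $\sum_{u,i} w_{ui}^{(2)}\delta(f(u,i),m_{ui})$ in Eq. (\ref{training_loss}) is deterministic with respect to the draw of $D_T$; hence the entire randomness of $\hat L_T(f|\phi^*) - L(f)$ lives in the reweighted sum $\frac{1}{|D_T|}\sum_k w_k^{(1)}\delta(f(u_k,i_k),r_k)$ over the i.i.d. triplets, which is what I would concentrate.

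Next I would apply Hoeffding's inequality to this reweighted sum for a single fixed $f$. Since $0 \le \delta \le \rho$, the $k$-th summand $\frac{1}{|D_T|}w_k^{(1)}\delta(f(u_k,i_k),r_k)$ is confined to an interval of width $w_k^{(1)}\rho/|D_T|$, so the squared widths sum to $\frac{\rho^2}{|D_T|^2}\sum_k (w_k^{(1)})^2 = \frac{\rho^2 S_{w^{(1)}}}{|D_T|}$, once $S_{w^{(1)}}$ is read as the mean-square $\frac{1}{|D_T|}\sum_k (w_k^{(1)})^2$ that the prose describes. Hoeffding then gives $P(|\hat L_T(f|\phi^*) - L(f)| \ge t) \le 2\exp(-2t^2|D_T|/(\rho^2 S_{w^{(1)}}))$. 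Taking a union bound over the $|\Space F|$ hypotheses, equating the total failure probability to $\eta$, and solving for $t$ produces $t = \sqrt{\frac{S_{w^{(1)}}\rho^2}{2|D_T|}\log\frac{2|\Space F|}{\eta}}$, so that with probability at least $1-\eta$ the deviation $|\hat L_T(f|\phi^*)-L(f)| \le t$ holds simultaneously for all $f \in \Space F$.

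Finally I would close with the two-line excess-risk argument. Because $\hat f(\phi^*)$ minimizes $\hat L_T(\cdot|\phi^*)$ over $\Space F$ and $f^* \in \Space F$, we have $\hat L_T(\hat f(\phi^*)|\phi^*) \le \hat L_T(f^*|\phi^*)$, whence $L(\hat f(\phi^*)) \le \hat L_T(\hat f(\phi^*)|\phi^*) + t \le \hat L_T(f^*|\phi^*) + t \le L(f^*) + 2t$, invoking the uniform deviation bound once on each side. This yields the stated inequality with the leading term $t$; the two-sided decomposition formally contributes a factor $2$ that is commonly absorbed or dropped in this setting.

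The step I expect to be the main obstacle is the rigorous application of Hoeffding to the reweighted sum, because the coefficients $w_k^{(1)} = p_U(u_k,i_k,r_k)/p_T(u_k,i_k,r_k)$ are themselves functions of the random triplet and hence coupled to the loss values $\delta(f(u_k,i_k),r_k)$, so the interval widths are data-dependent rather than fixed constants. Obtaining an $S_{w^{(1)}}$-dependent bound effectively requires conditioning on the realized weights and treating them as fixed coefficients, which is what makes the empirical mean-square $S_{w^{(1)}}$ appear. A fully rigorous alternative would either impose a uniform bound $w_k^{(1)} \le W$ and carry $W$ through the inequality, or replace Hoeffding with a Bernstein-type bound that controls the variance directly; I would flag this coupling as the point demanding the most care.
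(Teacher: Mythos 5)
Your proposal follows essentially the same route the paper itself indicates — it describes its omitted proof as "a variant of the proof of Corollary 4.6 in \cite{shalev2014understanding}, where we use our debiasing empirical risk and a more strict Hoeffding inequality" — namely the weighted Hoeffding bound with per-sample interval widths $w_k^{(1)}\rho/|D_T|$, a union bound over the finite class $\Space F$, and the standard ERM excess-risk decomposition. The caveats you flag (the factor of $2$ from the two-sided decomposition that the stated bound silently drops, the need to read $S_{w^{(1)}}$ as a mean-square rather than the literal sum in the displayed formula for the $|D_T|$ in the denominator to come out right, and the coupling between the random weights $w_k^{(1)}$ and the loss values) are genuine, but they are issues with the lemma as stated rather than gaps in your argument, and conditioning on or uniformly bounding the weights is the standard repair.
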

We omit the lemma proof due to the limit space. In fact, the proof is a variant of the proof of Corollary 4.6 in \cite{shalev2014understanding}, where we use our debiasing empirical risk and a more strict Hoeffding inequality. For convenient, $\sqrt {\frac{{{S_{{w^{(1)}}}}}\rho^2}{{2|{D_T}|}}\log \frac{{2|\Space F|}}{\eta }}$ is marked as $\varepsilon_1$.

Note that when the space $\Phi$ contains optimum $\phi^*$, the relations $L({f^*}) \le L(\hat f({\phi ^o})) \le L(\hat f({\phi ^*})) \le L({f^*})+\varepsilon_1$ hold, we have:
\newtheorem{cor}{Corollary}
\begin{cor}
\label{co1}
When the meta hypothesis space $\Phi$ contains optimum $\phi^*$, with probability of at least $1-\delta$, the differences between $L({f^*})$, $L(\hat f({\phi ^o}))$, $L(\hat f({\phi ^*}))$ are bounded with:
\begin{align}
|L({f^*}) - L(\hat f({\phi ^o}))| < {\varepsilon _1},|L(\hat f({\phi ^o})) - L(\hat f({\phi ^*}))| < {\varepsilon _1}.
\label{eq:cor}
\end{align}
\end{cor}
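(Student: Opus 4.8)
The plan is to reduce the whole statement to the single chain of inequalities announced just above the corollary, namely
\[
L(f^*) \le L(\hat f(\phi^o)) \le L(\hat f(\phi^*)) \le L(f^*) + \varepsilon_1,
\]
and then read off the two claimed bounds directly from it. Since each of the two differences in question is trapped inside an interval of length $\varepsilon_1$, the result becomes immediate once this chain is in place, so no additional estimation is needed.

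First I would verify the three links in turn. The left link $L(f^*) \le L(\hat f(\phi^o))$ is definitional: $f^*$ minimizes the true risk $L$ over the recommendation hypothesis space, and $\hat f(\phi^o) \in \Space F$ is merely one competitor, so its true risk cannot be smaller. The middle link $L(\hat f(\phi^o)) \le L(\hat f(\phi^*))$ uses the defining property $\phi^o = \arg\min_{\phi \in \Phi} L(\hat f(\phi))$ together with the standing hypothesis $\phi^* \in \Phi$: since $\phi^*$ is one feasible point of the minimization defining $\phi^o$, the minimizer $\phi^o$ performs at least as well. The right link $L(\hat f(\phi^*)) \le L(f^*) + \varepsilon_1$ is precisely Lemma~\ref{la1}, which holds with probability at least $1-\eta$.

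With the chain in hand I would extract the two bounds. For the first, the left link gives $|L(f^*) - L(\hat f(\phi^o))| = L(\hat f(\phi^o)) - L(f^*)$, and chaining the middle and right links bounds $L(\hat f(\phi^o))$ above by $L(f^*) + \varepsilon_1$, so the difference is at most $\varepsilon_1$. For the second, the middle link gives $|L(\hat f(\phi^o)) - L(\hat f(\phi^*))| = L(\hat f(\phi^*)) - L(\hat f(\phi^o))$; bounding $L(\hat f(\phi^*))$ above by $L(f^*) + \varepsilon_1$ and $L(\hat f(\phi^o))$ below by $L(f^*)$ leaves a gap of at most $\varepsilon_1$.

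The step needing the most care is not any single estimate but the bookkeeping around probability and the type of inequality. The left and middle links are deterministic, so the only randomness enters through Lemma~\ref{la1}; hence the conjunction of both bounds inherits exactly the confidence $1-\eta$ of the lemma (which matches the stated $1-\delta$ up to the change of symbol), and no union bound over separate events is required. I would also flag the minor discrepancy between the non-strict $\le \varepsilon_1$ that the chain produces and the strict $< \varepsilon_1$ asserted in the statement; this is harmless, since the Hoeffding-type inequality underlying $\varepsilon_1$ in Lemma~\ref{la1} already carries slack, so the strict form can be retained or the displayed bounds simply read as their non-strict versions.
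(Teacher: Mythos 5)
Your proposal is correct and follows exactly the route the paper takes: the paper's entire argument is the one-line chain $L(f^*) \le L(\hat f(\phi^o)) \le L(\hat f(\phi^*)) \le L(f^*)+\varepsilon_1$ stated just before the corollary, from which both bounds are read off. Your version merely spells out the justification of each link (optimality of $f^*$ over $\Space F$, optimality of $\phi^o$ over $\Phi \ni \phi^*$, and Lemma~\ref{la1}) and correctly notes the cosmetic $\eta$-versus-$\delta$ and strict-versus-non-strict discrepancies, which the paper leaves implicit.
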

This corollary suggests that when the training data is sufficiently large, the surrogate parameter $\phi ^o$ is approximately correct, making the recommendation model $f$ trained in an  unbiased manner and finally arrive at the approximate optimum.

Note that $L_U(\hat f(\phi))$ is an unbiased estimation of $L(\hat f(\phi))$, similar to lemma 1, we have:
\begin{lem}
\label{la2}
For any finite hypothesis space of meta model $\Phi=\{\phi_1, \phi_2, \cdots , \phi_{|\Phi|}\}$, with probability of at least $1-\eta$, we have:
\begin{align}
L(\hat f({\hat \phi})) \le L({\hat f(\phi^o)}) + \sqrt {\frac{\rho^2}{{2|{D_U}|}}\log \frac{{2|\Phi|}}{\eta }}.
\end{align}
\end{lem}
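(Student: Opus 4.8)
The plan is to read Lemma~\ref{la2} as a finite-hypothesis agnostic generalization bound applied one level up, at the meta (model-selection) layer, rather than at the recommendation layer as in Lemma~\ref{la1}. The quantity chosen on the uniform data is $\hat\phi = \arg\min_{\phi\in\Phi}\hat L_U(\hat f(\phi))$, and since each $\phi\in\Phi$ determines exactly one recommendation model $\hat f(\phi)$ via ERM on $D_T$, the family $\{\hat f(\phi):\phi\in\Phi\}$ is a finite set of at most $|\Phi|$ functions. The key ingredient, stated just above the lemma, is that $\hat L_U(\hat f(\phi))$ is an unbiased estimator of $L(\hat f(\phi))$, so concentration of the uniform-data empirical risk around the true risk is available for each member of this family.

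First I would condition on the training set $D_T$, which freezes every $\hat f(\phi)$ into a fixed deterministic function; this is the step that legitimizes treating $\{\hat f(\phi):\phi\in\Phi\}$ as an honest finite hypothesis class. Because $D_U$ is drawn i.i.d.\ from $p_U$ independently of $D_T$ and the loss $\delta$ is bounded by $\rho$, Hoeffding's inequality gives, for each fixed $\phi$, the two-sided tail $P(|\hat L_U(\hat f(\phi)) - L(\hat f(\phi))| \ge \varepsilon) \le 2\exp(-2|D_U|\varepsilon^2/\rho^2)$. A union bound over the $|\Phi|$ members yields uniform convergence $|\hat L_U(\hat f(\phi)) - L(\hat f(\phi))| < \varepsilon_2$ for all $\phi\in\Phi$ simultaneously with probability at least $1-\eta$, where solving $2|\Phi|\exp(-2|D_U|\varepsilon_2^2/\rho^2)=\eta$ produces exactly $\varepsilon_2 = \sqrt{\frac{\rho^2}{2|D_U|}\log\frac{2|\Phi|}{\eta}}$; note that the factor $2|\Phi|$ in the logarithm is precisely what the two-sided tail plus the union over $|\Phi|$ delivers.

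On this high-probability event I would close the argument by chaining through the two defining optimality properties: $L(\hat f(\hat\phi)) \le \hat L_U(\hat f(\hat\phi)) + \varepsilon_2$ by uniform convergence at $\hat\phi$; then $\hat L_U(\hat f(\hat\phi)) \le \hat L_U(\hat f(\phi^o))$ since $\hat\phi$ minimizes $\hat L_U(\hat f(\cdot))$ over $\Phi$; and finally $\hat L_U(\hat f(\phi^o)) \le L(\hat f(\phi^o)) + \varepsilon_2$ by uniform convergence at $\phi^o$. Concatenating these yields the comparison of $L(\hat f(\hat\phi))$ against $L(\hat f(\phi^o))$ claimed in the statement.

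The main obstacle is the data-dependence of the hypotheses: $\hat f(\phi)$ is itself learned from $D_T$, so the class over which model selection is performed is random, and one cannot naively invoke a fixed-class bound. Conditioning on $D_T$ resolves this cleanly because $D_T$ and $D_U$ are independent, so the conditional bound holds for every realization of $D_T$ and hence unconditionally. A secondary point requiring care is the constant in the deviation term: the chaining above formally produces two applications of $\varepsilon_2$, and matching the single-$\varepsilon_2$ form stated here calls for the same refined Hoeffding bookkeeping (a ``more strict Hoeffding inequality'') used for Lemma~\ref{la1}, or for absorbing the constant into $\varepsilon_2$, exactly in parallel with that companion argument.
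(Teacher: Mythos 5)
Your proposal is correct and matches the paper's (omitted) argument: the paper explicitly defers to the same finite-hypothesis-class route --- Hoeffding plus a union bound over $\Phi$ and the ERM chaining through $\hat L_U$, i.e.\ a variant of Corollary~4.6 of \cite{shalev2014understanding} applied at the meta level with $D_U$. You also correctly flag the only delicate points, namely conditioning on $D_T$ to fix the class $\{\hat f(\phi)\}$ and the factor-of-two in the deviation term that the paper absorbs via its ``more strict Hoeffding inequality.''
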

This lemma details the relation of $\hat \phi$ and $\phi^o$. By combining lemma 2 and corollary 1, we have the following generalization bound of the framework:

\begin{lem}
\label{la3}
\textbf{Generalization error bound of AutoDeBias.} For any recommendation model $f$ and meta model $\phi$ with finite hypothesis space $\Space F$ and $\Phi$,  when $\Phi$ contains $\phi^*$, with probability of at least $1-\eta$,  the generalization error of the model that is trained with AutoDebias, is bounded by:
\begin{align}
L(\hat f({\hat \phi})) < L({f^*}){\rm{ + }}\sqrt {\frac{{{S_{{w^{(1)}}}}\rho^2 }}{{2|{D_T}|}}\log \frac{{4|\Space F|}}{\eta }} {\rm{ + }}\sqrt {\frac{{{\rho ^2}}}{{2|{D_U}|}}\log \frac{{4|\Phi |}}{\eta }}.
\end{align}
Also, $L(\hat f({\hat \phi}))$ does not deviate from $L(\hat f({\phi ^*}))$ more than:
\begin{align}
|L(\hat f({\hat \phi})) - L(\hat f({\phi ^*}))|<\sqrt {\frac{{{S_{{w^{(1)}}}}\rho^2 }}{{2|{D_T}|}}\log \frac{{4|\Space F|}}{\eta }} {\rm{ + }}\sqrt {\frac{{{\rho ^2}}}{{2|{D_U}|}}\log \frac{{4|\Phi |}}{\eta }}.
\end{align}
\end{lem}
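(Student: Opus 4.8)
The plan is to derive both displayed bounds by composing the two high-probability guarantees already in hand --- Corollary~\ref{co1} (which internally rests on Lemma~\ref{la1}) and Lemma~\ref{la2} --- through a union bound, so essentially no new estimate is needed. The one genuinely new ingredient is the probability bookkeeping: each source bound consumes a slice of the confidence budget, so I would invoke \emph{each} of Corollary~\ref{co1} and Lemma~\ref{la2} at confidence level $\eta/2$ rather than $\eta$, so that their simultaneous validity holds with probability at least $1-\eta$ by the union bound. Replacing $\eta$ by $\eta/2$ inside the two bounds turns $\log\frac{2|\Space F|}{\eta}$ into $\log\frac{4|\Space F|}{\eta}$ and $\log\frac{2|\Phi|}{\eta}$ into $\log\frac{4|\Phi|}{\eta}$, which is exactly the origin of the factor-$4$ logarithms in the statement. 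For brevity write $\varepsilon_1' = \sqrt{\frac{S_{w^{(1)}}\rho^2}{2|D_T|}\log\frac{4|\Space F|}{\eta}}$ and $\varepsilon_2' = \sqrt{\frac{\rho^2}{2|D_U|}\log\frac{4|\Phi|}{\eta}}$ for the two resulting terms.

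For the first inequality I would start from Lemma~\ref{la2} at level $\eta/2$, which gives $L(\hat f(\hat\phi)) \le L(\hat f(\phi^o)) + \varepsilon_2'$ on the draw of $D_U$. It then remains only to control $L(\hat f(\phi^o))$, and Corollary~\ref{co1} at level $\eta/2$ (on the draw of $D_T$) supplies $L(\hat f(\phi^o)) < L(f^*) + \varepsilon_1'$ via its first inequality. Substituting yields $L(\hat f(\hat\phi)) < L(f^*) + \varepsilon_1' + \varepsilon_2'$, the claimed generalization bound, valid on the intersection of the two good events, i.e. with probability at least $1-\eta$.

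For the two-sided deviation $|L(\hat f(\hat\phi)) - L(\hat f(\phi^*))|$ I would decompose it as $[L(\hat f(\hat\phi)) - L(\hat f(\phi^o))] + [L(\hat f(\phi^o)) - L(\hat f(\phi^*))]$ and track signs rather than bound each piece crudely. The first bracket is nonnegative and at most $\varepsilon_2'$: the upper bound is Lemma~\ref{la2}, while nonnegativity follows from the \emph{definition} of $\phi^o$ as the minimizer of $L(\hat f(\cdot))$ over $\Phi$ (and $\hat\phi \in \Phi$), which forces $L(\hat f(\phi^o)) \le L(\hat f(\hat\phi))$. By the same minimality together with $\phi^* \in \Phi$, the second bracket lies in $(-\varepsilon_1', 0]$, its lower bound being the second inequality of Corollary~\ref{co1}. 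Hence the sum lies in $(-\varepsilon_1', \varepsilon_2']$, so its absolute value is below $\max(\varepsilon_1', \varepsilon_2') < \varepsilon_1' + \varepsilon_2'$, matching the stated bound. The main thing to get right here is not any hard computation but this probabilistic and sign bookkeeping: one must split the confidence into two halves (the events are measurable with respect to the distinct samples $D_T$ and $D_U$, so the union bound applies cleanly and does not even require their independence), and one must resist relaxing to $\varepsilon_1'+\varepsilon_2'$ prematurely, instead using the minimality of $\phi^o$ to pin the deviation into the one-sided interval $(-\varepsilon_1', \varepsilon_2']$ before passing from $\max$ to the sum.
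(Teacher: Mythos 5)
Your proof is correct and follows essentially the same route the paper intends: it explicitly states that Lemma~\ref{la3} follows "by combining Lemma~\ref{la2} and Corollary~\ref{co1}," which is precisely your chaining of $L(\hat f(\hat\phi)) \le L(\hat f(\phi^o)) + \varepsilon_2'$ with $L(\hat f(\phi^o)) < L(f^*) + \varepsilon_1'$, plus a union bound at confidence $\eta/2$ each to account for the factor-$4$ logarithms. Your sign-tracking via the minimality of $\phi^o$ for the second inequality is a slightly sharper bookkeeping than the crude triangle inequality, but the approach is the same.
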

This lemma proves that when sufficient training data and uniform data are provided, AutoDeBias can acquire an almost optimal debiasing strategy, making the recommendation model arrive at the approximate optimum. 

\textbf{Addressing question (2).} Inductive bias occurs in a small-capacity meta model,
making the hypothesis space $\Phi$ do not contain $\phi^*$ and Equation (\ref{eq:cor}) do not hold accordingly. Nevertheless, we have:

\begin{lem}
\label{la4}
\textbf{Generalization error bound of AutoDeBias with inductive bias.}  For any recommendation model $f$ and meta model $\phi$ with finite hypothesis space $\Space F$ and $\Phi$, the generalization error of the model that is trained with AutoDebias, is bounded by:
\begin{equation}
    \begin{split}
L(\hat f(\hat \phi )) &< L({f^*}){\rm{ + }}\sqrt {\frac{{S_{{w^{(1)}}}^o\rho }}{{2|{D_T}|}}\log \frac{{4|\Space F|}}{\eta }} {\rm{ + }}\sqrt {\frac{{{\rho ^2}}}{{2|{D_U}|}}\log \frac{{4|\Phi |}}{\eta }} \\
 &+ ({\Delta _{(1)}} + {\Delta _{(2)}})\rho  + {\Delta _m}{\rho _d},
    \end{split}
\end{equation}
where ${S_{{w^{(1)}}}^o}$ denotes the mean-square of the surrogate ${w^o}^{(1)}$ in $\phi^o$ with ${S^o_{{w^{(1)}}}} = \sum\limits_{k = 1}^{|{D_T}|} {{({{w^o}^{(1)}_k})^2}}$;
and ${\Delta _{(1)}}$, ${\Delta _{(2)}}$, ${\Delta _{m}}$ denotes the expected deviation between ${\phi ^o}$ and ${\phi ^*}$ caused by inductive bias in terms of $w^{(1)}$, $w^{(2)}$, $m$ respectively,
\ie ${\Delta _{(1)}} = \sum\limits_{k = 1}^{|{D_T}|} {{p_U}({u_k},{i_k},{r_k})|\frac{{{w^*}_k^{(1)} - {w^o}_k^{(1)}}}{{{w^*}_k^{(1)}}}|}$,
${\Delta _{(2)}} = \sum\limits_{u \in U,i \in I} {|{w^*}_{ui}^{(2)} - {w^o}_{ui}^{(2)}|}$,
${\Delta _m} = \sum\limits_{u \in U,i \in I} {{w^*}_{ui}^{(2)}|m_{ui}^* - m_{ui}^o|}$; ${\rho _d}$ denotes the bound of the gradient ${\nabla}\delta (.,.)$.
\end{lem}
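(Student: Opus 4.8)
The plan is to route the bound through the surrogate optimum $\phi^o$, exactly as in Lemma~\ref{la3}, but to replace the step that there relied on $\phi^*\in\Phi$ by a direct estimate of the systematic bias that the sub-optimal $\phi^o$ injects into the reweighted risk. Concretely, I would split
$$L(\hat f(\hat\phi)) - L(f^*) = \big(L(\hat f(\hat\phi)) - L(\hat f(\phi^o))\big) + \big(L(\hat f(\phi^o)) - L(f^*)\big),$$
and control the two brackets by separate concentration arguments on $D_U$ and $D_T$. The first bracket is handled verbatim by the argument of Lemma~\ref{la2}: since $L_U(\hat f(\phi))$ is an unbiased estimate of $L(\hat f(\phi))$ over the finite family $\{\hat f(\phi):\phi\in\Phi\}$ and $\hat\phi$ minimizes $L_U(\hat f(\cdot))$, a Hoeffding bound gives $L(\hat f(\hat\phi)) - L(\hat f(\phi^o)) \le \sqrt{\frac{\rho^2}{2|D_U|}\log\frac{4|\Phi|}{\eta}}$ with probability at least $1-\eta/2$ (the $4|\Phi|$ rather than $2|\Phi|$ coming from later splitting the failure probability in half). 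The whole novelty sits in the second bracket, where I can no longer write $L(\hat f(\phi^o)) \le L(\hat f(\phi^*))$ since $\phi^*\notin\Phi$.

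To bound the second bracket I would first quantify how far the $\phi^o$-reweighted empirical risk is from being unbiased. Using that $\phi^*$ is exactly unbiasing, i.e. $\mathbb E_{p_T}[\hat L_T(f|\phi^*)] = L(f)$, I would write, for every $f\in\Space F$,
$$\mathbb E_{p_T}[\hat L_T(f|\phi^o)] - L(f) = \mathbb E_{p_T}[\hat L_T(f|\phi^o)] - \mathbb E_{p_T}[\hat L_T(f|\phi^*)],$$
and expand the right-hand side into three pieces according to the deviations of $w^{(1)}$, $w^{(2)}$ and $m$. For the reweighting piece I would use the identity $p_T\,w^{*(1)} = p_U$ on $S_1$ to turn the $p_T$-expectation of $(w^{o(1)} - w^{*(1)})\delta$ into $\sum_{S_1} p_U\,\frac{w^{o(1)} - w^{*(1)}}{w^{*(1)}}\,\delta$, which is at most $\Delta_{(1)}\rho$ after $|\delta|\le\rho$. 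For the imputation piece I would add and subtract $w^{*(2)}\delta(f,m^o)$, bounding $\sum_{u,i}(w^{o(2)} - w^{*(2)})\delta(f,m^o)$ by $\Delta_{(2)}\rho$ and, via the mean-value estimate $|\delta(f,m^o) - \delta(f,m^*)| \le \rho_d\,|m^o - m^*|$ with $\rho_d$ the bound on $\nabla\delta$, bounding $\sum_{u,i} w^{*(2)}(\delta(f,m^o) - \delta(f,m^*))$ by $\Delta_m\rho_d$. Summing yields the uniform bias estimate $|\mathbb E_{p_T}[\hat L_T(f|\phi^o)] - L(f)| \le (\Delta_{(1)} + \Delta_{(2)})\rho + \Delta_m\rho_d$.

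With this in hand, the second bracket follows from the Lemma~\ref{la1} argument applied to the $\phi^o$-risk. I would observe that only the reweighted-sample term of $\hat L_T(f|\phi^o)$ is stochastic under $p_T$ (the imputation term is deterministic in $f$), so its Hoeffding concentration is governed by $\sum_k (w^{o(1)}_k)^2 = S^o_{w^{(1)}}$; uniform convergence over $\Space F$ then gives, with probability $\ge 1-\eta/2$, $\mathbb E_{p_T}[\hat L_T(f|\phi^o)] \le \hat L_T(f|\phi^o) + \sqrt{\frac{S^o_{w^{(1)}}\rho^2}{2|D_T|}\log\frac{4|\Space F|}{\eta}}$ for all $f$. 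Chaining this with the empirical optimality $\hat L_T(\hat f(\phi^o)|\phi^o) \le \hat L_T(f^*|\phi^o)$ and the bias estimate applied at both $\hat f(\phi^o)$ and $f^*$ gives $L(\hat f(\phi^o)) \le L(f^*) + \sqrt{\frac{S^o_{w^{(1)}}\rho^2}{2|D_T|}\log\frac{4|\Space F|}{\eta}} + (\Delta_{(1)} + \Delta_{(2)})\rho + \Delta_m\rho_d$. Adding the first bracket and applying a union bound over the two failure events (each of probability $\eta/2$) produces the claim.

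The main obstacle I anticipate is the bias decomposition of the middle paragraph: collapsing the reweighting term to the $p_U$-weighted relative deviation $\Delta_{(1)}$ requires carefully exploiting $p_T w^{*(1)} = p_U$ and respecting the support split $S_0/S_1$ (the $w^{(2)},m$ terms live on the imputed region), while the imputation term forces the Lipschitz constant $\rho_d$ to enter only there. A secondary, purely bookkeeping subtlety is that a naive two-sided uniform-convergence argument would double the $\varepsilon$ and $\Delta$ terms; as in Lemma~\ref{la1}, I would avoid this by applying a single-hypothesis (one-sided) Hoeffding bound at the fixed $f^*$ and reserving the union bound for the data-dependent $\hat f(\phi^o)$, so that each term appears with coefficient one.
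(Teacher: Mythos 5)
Your proposal follows essentially the same route as the paper's (very terse) proof: both pass through the surrogate risk $L_s(f|\phi^o)=E_{p_T}[\hat L_T(f|\phi^o)]$, bound its systematic deviation from $L(f)$ by $({\Delta_{(1)}}+{\Delta_{(2)}})\rho+{\Delta_m}{\rho_d}$ via the explicit form of Equation~(\ref{eq:lf}), and then reuse the concentration machinery of Lemmas~\ref{la1}--\ref{la3} with $S^o_{w^{(1)}}$ in place of $S_{w^{(1)}}$. Your middle paragraph simply carries out in detail the bias decomposition (the $p_Tw^{*(1)}=p_U$ substitution on $S_1$ and the mean-value step introducing $\rho_d$) that the paper only asserts, so the two arguments coincide.
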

\begin{proof}
Due to the limited space, here we just give brief proof.
For convenience, we mark $({\Delta _{(1)}} + {\Delta _{(2)}}\rho ) + {\Delta_m}{\rho _d}$ as $\varepsilon_3$.
Let $L_s(f|\phi)$ denote the surrogate risk with parameter $\phi$, that is, ${L_s}(f|\phi ) = {E_{{p_T}}}[{{\hat L}_T}(f|\phi)]$.
Naturally, ${L_s}(f|\phi)$ deviates from the true risk $L(f)$.
But their difference, which can be calculated from Equation (\ref{eq:lf}),
is bounded by ${\varepsilon _3}$. Let $f^*(\phi)$ denote the optimum of the surrogate risk ${L_s}(f|\phi)$ with the parameters $\phi$.
The inequality ${L_s}(f^*(\phi^o)|\phi^o)<L(f^*)+{\varepsilon _3}$ holds.
Further, note that $L_T(f|\phi^o)$ is an unbiased estimation of $L_s(f|\phi^o)$.
The difference between $L(\hat f({\phi ^o}))$ and ${L_s}(\hat f({\phi ^o}))$ is bounded.
Combining these conclusions with the lemma 3, lemma 4 is proofed.
\end{proof}
Lemma 4 tells us that although the inductive bias occurs, the generalization error is bounded.
In fact, constraining the capacity of the meta model with inductive bias, often can improve model performance.
On the one hand, the size of hypothesis space $\Phi$ is reduced;
On the other hand, the surrogate optimum $\phi^o$ in the constraint space $\Phi$ often has smaller variance,
reducing the mean-square ${S_{{w^{(1)}}}^o}$ and further tightening the bound.

\section{Experiments}

In this section, we conduct experiments to evaluate the performance of our proposed AutoDebias.
Our experiments are intended to address the following research questions:
\begin{itemize}
    \item [\textbf{RQ1:}] Does AutoDebias outperform SOTA debiasing methods?
    \item [\textbf{RQ2:}] How do different components (\ie learning of $w^{(1)}$, $w^{(2)}$ or $m$) affect AutoDebias performance?
    \item [\textbf{RQ3:}] How do learned debaising parameters $\phi$ correct data bias?
    \item [\textbf{RQ4:}] Is AutoDebias universal to handle various biases?
\end{itemize}
%Besides, we also do experiments on implicit and list feedback, and try to answer:
%\begin{itemize}
%    \item [\textbf{RQ5:}] Can the meta-debias framework still work well for implicit feedback?
%    \item [\textbf{RQ6:}] Can the meta-debias framework deal with various biases simultaneously well?
%\end{itemize}

%To  verify the generalization of our proposed AutoDebias,
%we conduct experiments in three main recommendation situations:
%(1) Explicit feedback: the system collects both positive and negative user-item interactions,
%which means that we can know not only what users like, but also what they dislike.
%(2) Implicit feedback: the system only collects positive user-item interactions (click/purchase),
%which means that we know only what users like, but do not know what they dislike.
%(3) List feedback: items are recommended to users as lists,
%which indicates positions of items will affect the interactions of user-item pairs.
%There exist various biases (selection bias and position bias) in list feedback.

\subsection{Experimental Setup}
\label{experiment setup}

In order to validate the universality of AutoDebias, our experiments are on
three types of data: explicit feedback, implicit feedback, and the feedback on recommendation list.

\textbf{Dataset.} For explicit feedback, we refer to ~\cite{DBLP:conf/sigir/LiuCDHP020} and use two public datasets (\textit{Yahoo!R3} and \textit{Coat}) for our experiments.
Both datasets contain a set of biased data collecting the normal interactions of users in the platform,
and a small set of unbiased data from stochastic experiment where items are assigned randomly.
%The unbiased data is used for: (1) de-biasing when training the RS model; (2) evaluate the learned RS model.
Following~\cite{DBLP:conf/sigir/LiuCDHP020}, we regard the biased data as training set $D_T$, while split the unbiased data into three parts:
5\% for uniform set $D_U$ to help training, 5\% for validation set $D_{V}$ to tune the hyper-parameters, and 90\% for test set $D_{Te}$ to evaluate the model.
The ratings are binarized with threshold 3.
That is, the observed rating value larger than 3 is labeled as positive ($r=1$), otherwise negative $r=-1$.

To generate implicit feedback data, we also use the aforementioned datasets, but abandon the negative feedback of the training data. this treatment is the same as the recent work ~\cite{saito2020unbiased}.

We also adopt a synthetic dataset \textit{Simulation}, which simulates user feedback on recommendation lists
to validate that AutoDebias can handle the situation when both position bias and selection bias occur.
Here we use synthetic dataset as there is no public dataset that contains ground-truth of unbiased data for model evaluation. The simulation process consists of four steps: (1) we follow ~\cite{DBLP:conf/www/SunKNS19} to generated the ground-truth preference scores $r_{ui}$ of 500 users on 500 items. For each user, we randomly split their feedback into two parts: 150 for unbiased data set ($D_1$) and 350 remaining for simulating biased data ($D_2$). (2) For each user, 25 items are sampled randomly from $D_2$ to train a recommendation model $M$. $M$ returns recommendation lists containing top-25 items for each user, which induces selection bias in data. (3) We then refer to ~\cite{DBLP:conf/kdd/GleichL11} to simulate user feedback on recommendation lists, where position bias has been considered. Specifically, the click is generated with the probability of $\mathop{min}(\frac{{r}_{u,i} }{2*p^{1/2}}, 1)$.
The statistics of those datasets is in Table~\ref{Dataset}.
\begin{table}[t]
    \centering
    \setlength{\abovecaptionskip}{0.2cm}
 \setlength{\belowcaptionskip}{-0.00cm}
    \caption{Statistics of the datasets.}
    \resizebox{.45\textwidth}{!}{%
        \begin{tabular}{ccccccc}
        \hline
        Dataset     & Users  & Items  & Training & Uniform & Validation  & Test \\
        \hline
        Yahoo!R3   & 15,400  & 1,000  & 311,704 & 2,700 & 2,700  & 48,600       \\
        Coat       & 290  & 300  & 6,960 & 232 & 232  & 4,176   \\
        Simulation  & 500  & 500  & 12,500 & 3,750 & 3,750 & 67,500       \\
        \hline
        \end{tabular}
    }
     \vspace{-0.3cm}
    \label{Dataset}
\end{table}

\textbf{Evaluation metrics.} We adopt the following metrics to evaluate recommendation performance:
\begin{itemize}
    \item \textbf{NLL} evaluates the performance of the predictions with:
    \begin{equation}
        NLL = - \frac{1}{|D_{te}|} \sum_{(u,i,r)\in D_{te}} log \left(1+e^{-r*f_\theta(u,i)}\right),
    \end{equation}
    % This metrics evaluates each tuple $(u,i,r)$ firstly, and takes average of them,
    % which can be regarded as the evaluation of separate interaction level.
    \item \textbf{AUC} evaluates the performance of rankings: %and is defined as follows:
    \begin{equation}
        AUC = \frac{\sum_{(u,i)\in D_{te}^+}\hat Z_{u,i} - (|D_{te}^+|+1)(|D_{te}^+|)/2}{(|D_{te}^+|) * (|D_{te}| - |D_{te}^+|)},
    \end{equation}
    where $|D_{te}^+|$ detotes the number of postive data in $D_{te}$,
    $\hat Z_{u,i}$ denotes the rank position of a positive feedback $(u,i)$.

    % This metrics evaluates the ranking performance for all interactions, which can be regarded as the evaluation of total interaction level.
    \item \textbf{NDCG@k} measures the quality of recommendation through discounted importance based on position:
    \begin{equation}
     \begin{split}
        DC{G_u}@k = \sum\limits_{(u,i) \in {D_{te}}} {\frac{{\mathbf I({{\hat Z}_{u,i}} \le k)}}{{log({{\hat Z}_{u,i}} + 1)}}} \\
        NDCG@k = \frac{1}{|\Set{U}|} \sum_{u\in \Set{U}} \frac{DCG_u@k}{IDCG_u@k},
         \end{split}
    \end{equation}
    where $IDCG_u@k$ is the ideal $DCG_u@k$.
    % This metrics evaluates the ranking performance for each user, and takes average for all users,
    % which can be regarded as the evaluation of user ranking level.
\end{itemize}

\textbf{Implementation details.}
Matrix Factorization (MF) has been selected as a benchmark recommendation model for experiments,
and it would be straightforward to replace it with more sophisticated models such as Factorization Machine~\cite{rendle2012factorization}, or Neural Network~\cite{DBLP:conf/www/HeLZNHC17}.
SGD has been adopted for optimizing base model and Adam for meta model. 
Grid search is used to find the best hyper-parameters based on the performance on validate set.
We optimize the base model with SGD and meta model with Adam. The search space of learning rate and weight decay are [1e-4, 1e-3,
1e-2, 1e-1].

\subsection{Performance Comparison on Explicit Feedback (RQ1)}
\textbf{Baseline.} (1) MF(biased), MF(uniform) and MF(combine):
the basic matrix factorization model that trained on ${D}_T$, ${D}_U$, and $D_T + D_U$, respectively;
(2) Inverse propensity score (IPS)~\cite{DBLP:conf/icml/SchnabelSSCJ16}: a counterfactual technique that reweighs the
collected data. We follow~\cite{DBLP:conf/icml/SchnabelSSCJ16} and calculate the propensity with naive bayes;
(3) Doubly robust (DR)~\cite{wang2019doubly}: combining data imputation and inverse propensity score;
(4) KD-Label~\cite{DBLP:conf/sigir/LiuCDHP020}: the state-of-the-art method that transfers the unbiased information with a teacher model.
We refer to ~\cite{DBLP:conf/sigir/LiuCDHP020} and choose the best label-based distillation for comparison.
(5) CausE~\cite{DBLP:conf/sigir/LiuCDHP020}: distilling unbiased information with an extra alignment term.

\textbf{Performance comparison.}
Table~\ref{result of explicit} presents the recommendation performance of the compared methods in terms of three evaluation metrics.
The boldface font denotes the winner in that column. We have the following observations:
(1) Overall, AutoDebias outperforms all compared methods on all datasets for all metrics.
Especially in the dataset Yahoo!R3, the improvements are rather impressive --- 5.6\% and 11.2\% in terms of NLL and NDCG@5.
This result validates that our proposed AutoDebias can learn better debiasing configurations than compared methods.
(2) MF(uniform) that is directly trained on uniform data performs quite terribly. The reason is that uniform data is often in a quite small scale. Training a recommendation model on uniform data will suffer from serious over-fitting. This phenomenon also affect the performance of KD-based methods (KD-label and CausE). Thus, they perform worse than AutoDebias with a certain margin.

\begin{table}[t]
    \setlength{\abovecaptionskip}{0.2cm}
 \setlength{\belowcaptionskip}{-0.00cm}
\caption{Performance comparisons on explicit feedback data. The boldface font denotes the winner in that column.}
\label{result of explicit}
\resizebox{.45\textwidth}{!}{%
\begin{tabular}{c|ccc|ccc}
\hline
\multirow{2}{*}{Method} & \multicolumn{3}{c}{Yahoo!R3}  & \multicolumn{3}{c}{Coat} \\ \cline{2-7}
                        & NLL    & AUC    &  NDCG@5   & NLL   &   AUC  & NDCG@5 \\ \hline
MF(biased)              &    -0.587   &      0.727       &  0.550    &  -0.539  &  0.747    &  0.500  \\
MF(uniform)              &    -0.513    &    0.573   &   0.449     & -0.623    &  0.580    &  0.358   \\
MF(combine)             &   -0.580   &    0.730    &  0.554  &  -0.538     &     0.750    &  0.504    \\
IPS             &  -0.448  &    0.723   &   0.549  &    -0.515   &    0.759   &   0.509   \\
DR                 &   -0.444  &    0.723  &  0.552  &   \textbf{-0.512}    &  0.765    & 0.521    \\
CausE        &    -0.579    &   0.731    &  0.551  & -0.529  &   0.762     &  0.500  \\
KD-Label            &  -0.632  &    0.740   &  0.580   & -0.593  &   0.748   &  0.504   \\
AutoDebias        &  \textbf{-0.419}  &    \textbf{0.741}   &   \textbf{0.645}   &  \textbf{-0.512}  &   \textbf{0.766}   &  \textbf{0.522}   \\ \hline
\end{tabular}
}
 \vspace{-0.2cm}
\end{table}

To further validate the high efficiency of AutoDebias on utilizing uniform data, we test these methods with varying size of uniform data as shown in figure ~\ref{Robustness}.
We can find AutoDebias consistently outperforms compared methods. More impressively, AutoDebias still perform well even if a quite small scale of uniform data is provided (\eg Ratio=1\%), while KD-label falls short in this situation --- performs close to or even worse than MF(biased).

\begin{figure}[t!]
    \centering
    \setlength{\abovecaptionskip}{0.2cm}
 \setlength{\belowcaptionskip}{-0.00cm}
    \includegraphics[width=0.43\textwidth]{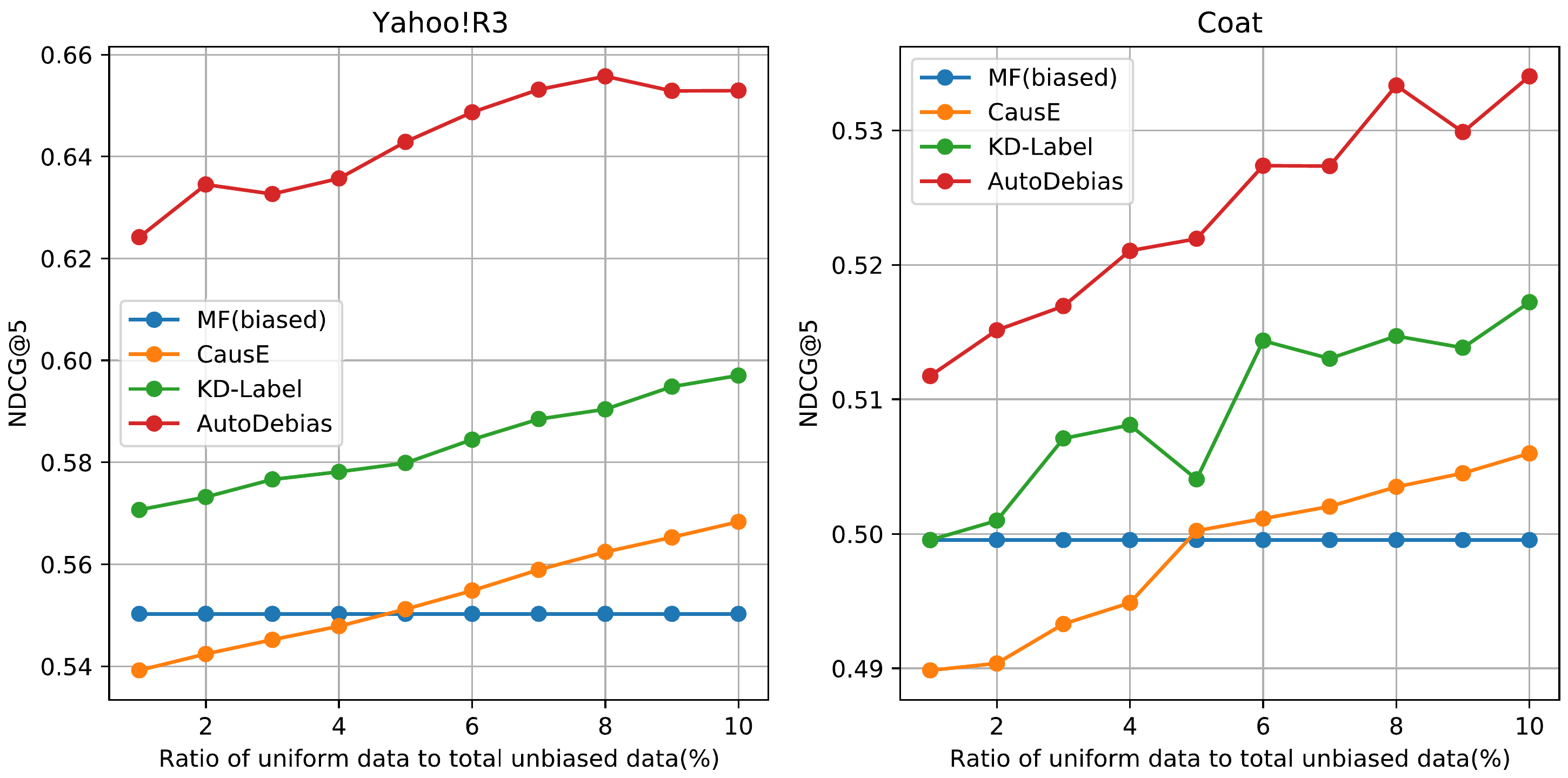}
    \caption{Performance comparisons with varying uniform data size. Here we sample different percentages of total unbiased data as uniform data.  }
    \label{Robustness}
     \vspace{-0.2cm}
\end{figure}

\subsection{Ablation Study (RQ2)}
We conduct ablation study to explore whether the three components $w^{(1)}$, $w^{(2)}$ and $m$ are all desired to be introduced.
Table~\ref{Ablation} shows the performance of the ablated model where different components are removed.
% In this table, the first row is the basic RS model; the second row indicates that learning the IPW $w^{(1)}$ for each samples in the training set;
% the third line means learning $w^{(1)}$ and $m$ simultaneously, and $w^{(2)}$ is constant;
% the last line means learning all the three modules $w^{(1)}$, $w^{(2)}$ and $m$ simultaneously.
From this table, we can conclude that all the parameters are important for debiasing. Introducing $w^{(1)}$ and $m$ consistently boost the performance. Despite introducing $w^{(2)}$ harms the performance a bit on Coat, it brings an impressive improvement on Yahoo!R3.
It is worth emphasising that even if imputation modular is removed, AutoDebias-w1 still outperforms IPS. This results validates the effectiveness of the meta-learning algorithm on finding the proper $w^{(1)}$.

\subsection{Exploratory Analysis (RQ3)}
To answer question RQ3, we now explore the learned meta parameters from the dataset Yahoo!R3. This exploration provides insights into how AutoDebias corrects data bias.

\textbf{Exploring imputation values $m$:} Table~\ref{imputation} presents the imputation values for user-item pairs with different kinds of feedback. We can find the imputation value for the positive user-item pairs is larger than missing pairs, while the value for missing pairs is large than negative pair. More interestingly, we can find the optimal imputation value for missing pair is rather small (\eg -0.913). This phenomenon is consistent with the finding in \cite{DBLP:conf/recsys/MarlinZ09} that a user dislikes a random-selected item in a high probability.

\begin{table}[t!]
\small
    \setlength{\abovecaptionskip}{0.2cm}
 \setlength{\belowcaptionskip}{-0.00cm}
    \caption{Ablation study in terms of NDCG@5. }
    \label{Ablation}
    \begin{tabular}{ccccccc}
    \hline
    Method&   $w^{(1)}$ & m & $w^{(2)}$ & Yahoo!R3 & Coat \\ \hline
    MF  & $\times$  & $\times$ & $\times$  & 0.550  & 0.500 \\
    AutoDebias-w1  & \checkmark & $\times$ & $\times$  & 0.573  & 0.510 \\
    AutoDebias-w1m  & \checkmark & \checkmark & $\times$  & 0.581  & 0.526 \\
    AutoDebias   & \checkmark & \checkmark & \checkmark  & 0.645  & 0.521 \\ \hline
    \end{tabular}
     \vspace{-0.2cm}

\end{table}

\begin{table}
    \centering
          \setlength{\abovecaptionskip}{0.2cm}
 \setlength{\belowcaptionskip}{-0.00cm}
      \caption{The learned parameters $m$ for different feedback.}
      \label{imputation}
        \begin{tabular}{ccc}
        \hline
        Positive ($r=1$) & Missing & Negative ($r=-1$)  \\ \hline
        -0.992  & -0.913 & 0.227   \\\hline
        \end{tabular}
         \vspace{-0.2cm}
\end{table}

\begin{table}
    \centering
       \setlength{\abovecaptionskip}{0.2cm}
 \setlength{\belowcaptionskip}{-0.00cm}
        \caption{The learned parameters $w_r^{(1)}$ in different models.}
        
        \label{IPW_rating}
        \begin{tabular}{ccc}
        \hline
        Method & $r=-1$  & $r=1$  \\ \hline
        AutoDebias-w1  & 3.37  & 1.56 \\
        AutoDebias& 0.29  & 4.77  \\\hline
        \end{tabular}
         \vspace{-0.2cm}
\end{table}

% \begin{table}
% \small
%     \begin{minipage}{0.48\linewidth}
%       \centering
%       \caption{The learned parameters $m$ for different user-item pairs.}
%       \label{imputation}
%         \begin{tabular}{ccc}
%         \hline
%         $r=-1$ & miss & $r=1$  \\ \hline
%         -0.992  & -0.913 & 0.227   \\\hline
%         \end{tabular}
%     \end{minipage}
%     \begin{minipage}{0.48\linewidth}
%         \centering
%         \caption{The learned parameters $w_r^{(1)}$ in different models.}
%         \label{IPW_rating}
%         \begin{tabular}{ccc}
%         \hline
%         Method & $r=-1$  & $r=1$  \\ \hline
%         AutoDebias-w1  & 3.37  & 1.56 \\
%         AutoDebias& 0.29  & 4.77  \\\hline
%         \end{tabular}
%     \end{minipage}
%   \end{table}

\textbf{Exploring weights $w^{(1)}$:} Note that $w^{(1)}$ is implemented with a linear model while the feature vector consists of three one hot vectors in terms of user id, item id and the feedback. The linear parameter $\varphi_1$ can be divided into three parts ${\varphi _1}{\rm{ = [}}{\varphi _{11}}^\circ {\varphi _{12}}^\circ {\varphi _{13}}{\rm{]}}$ and $w^{(1)}$ can be decomposed as a product of user-dependent weights $w^{(1)}_u$, the item-dependent weights $w^{(1)}_i$ and the label-dependent weights $w^{(1)}_r$, \ie $ w_k^{(1)} = \exp (\varphi _{11}^T{{\bf{x}}_{{u_k}}})\exp (\varphi _{12}^T{{\bf{x}}_{{i_k}}})\exp (\varphi _{13}^T{{\bf{x}}_{{r_k}}}) = w_{{u_k}}^{(1)}w_{{i_k}}^{(1)}w_{{r_k}}^{(1)}$. Here we explore the distribution of the learned $w^{(1)}_u, w^{(1)}_i, w^{(1)}_r$ to provide the insights.

Table~\ref{IPW_rating} shows the value of learned $w^{(1)}_r$ for ablated AutoDebias-w1 and AutoDebias. We can find that $w^{(1)}_{r}$ for $r=-1$ is larger than $w^{(1)}_{r}$ for $r=1$ in AutoDebias-w1, while $w^{(1)}_{r}$ for $r=-1$ is smaller than $w^{(1)}_{r}$ for $r=1$ in AutoDebias. This interesting phenomenon can be explained as following. AutoDebias-w1 that does not use data imputation, will up-weight the negative instances to account for such selection bias. For AutoDebias, as it has given negative imputed labels for missing data, it does not need to address such under-represented problem and turn to learn other aspects such as data confidence. This result validates the adaptation of our AutoDebias.

We also explore the distribution of $w^{(1)}_i$ with the item popularity as presented in Figure ~\ref{weight}.
We can find: (1) The weights decreases as their popularity increases.
This phenomenon is consistent with our intuition that the popular items are likely to be exposed (\aka popularity bias) and their feedback is over-represented and need down-weighing to mitigate bias.
(2) The variance of the weights of the popular items is larger than unpopular.
 The reason is that some popular items are high-quality, on which the feedback is indeed valuable, while the feedback of others may be caused by the conformity effect and is unreliable. As a result, the weights exhibit more diverse.

To further validate the superiority of our model on addressing such popularity bias. We draw the plots of the performance of different methods in terms of popular and unpopular items in Figure ~\ref{weight}. The popular items are defined as the top 20\% items according to their positive feedback frequency, while the unpopular items are for the rest. We can find AutoDebias provides more exposure opportunities to unpopular items, and the improvement mainly comes from unpopular items. This result validates that AutoDebias can generate fairer recommendation results.

\begin{figure}[t!]
    \centering
    \subfigure[]{
    \label{weight}
    \begin{minipage}[t]{0.33\linewidth}
    \centering
    \includegraphics[width=1in]{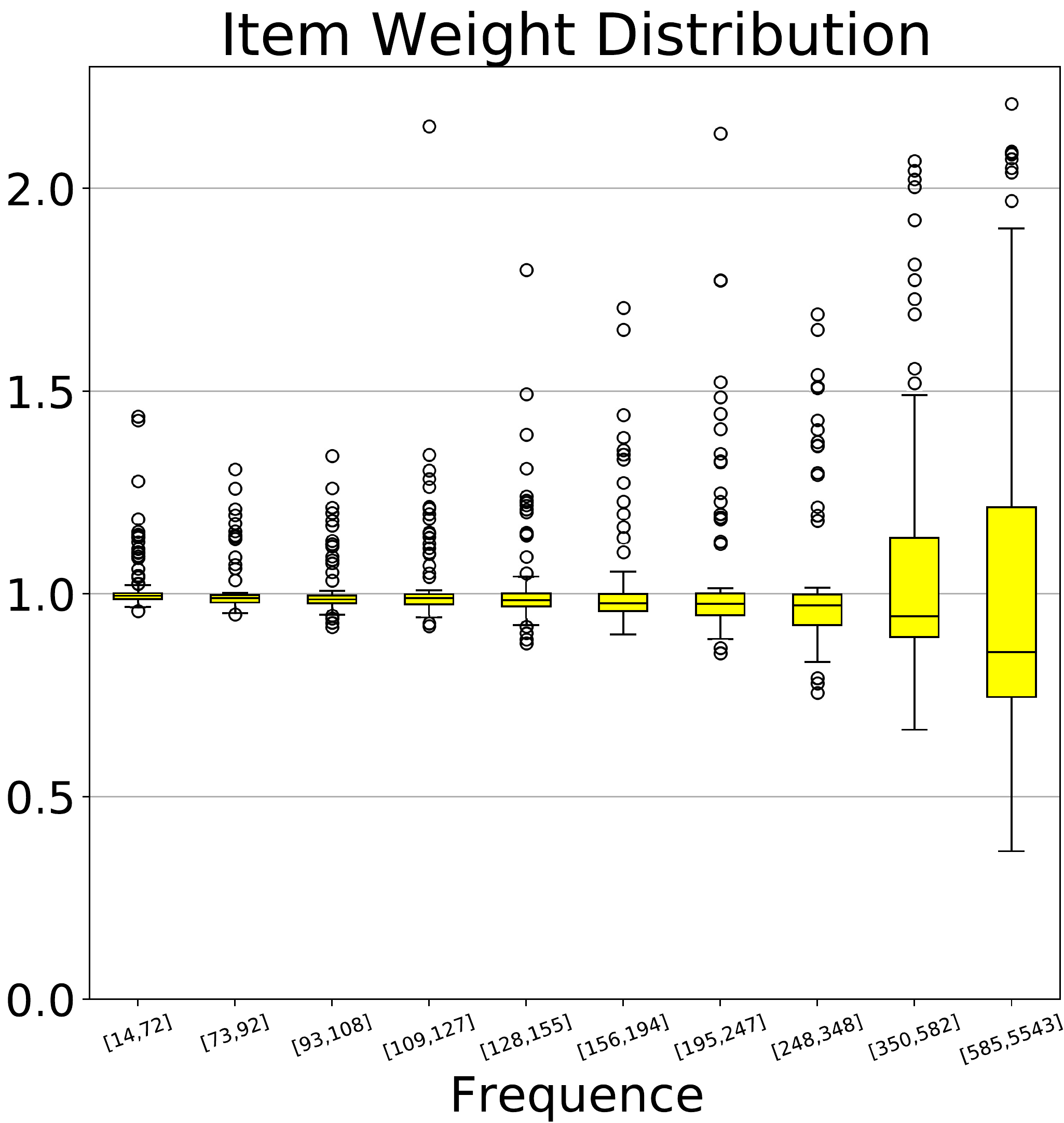}
    % \caption{Item weight with frequency. }
    \end{minipage}
    }%
    \subfigure[]{
    \label{popular}
    \begin{minipage}[t]{0.33\linewidth}
    \centering
    \includegraphics[width=1in]{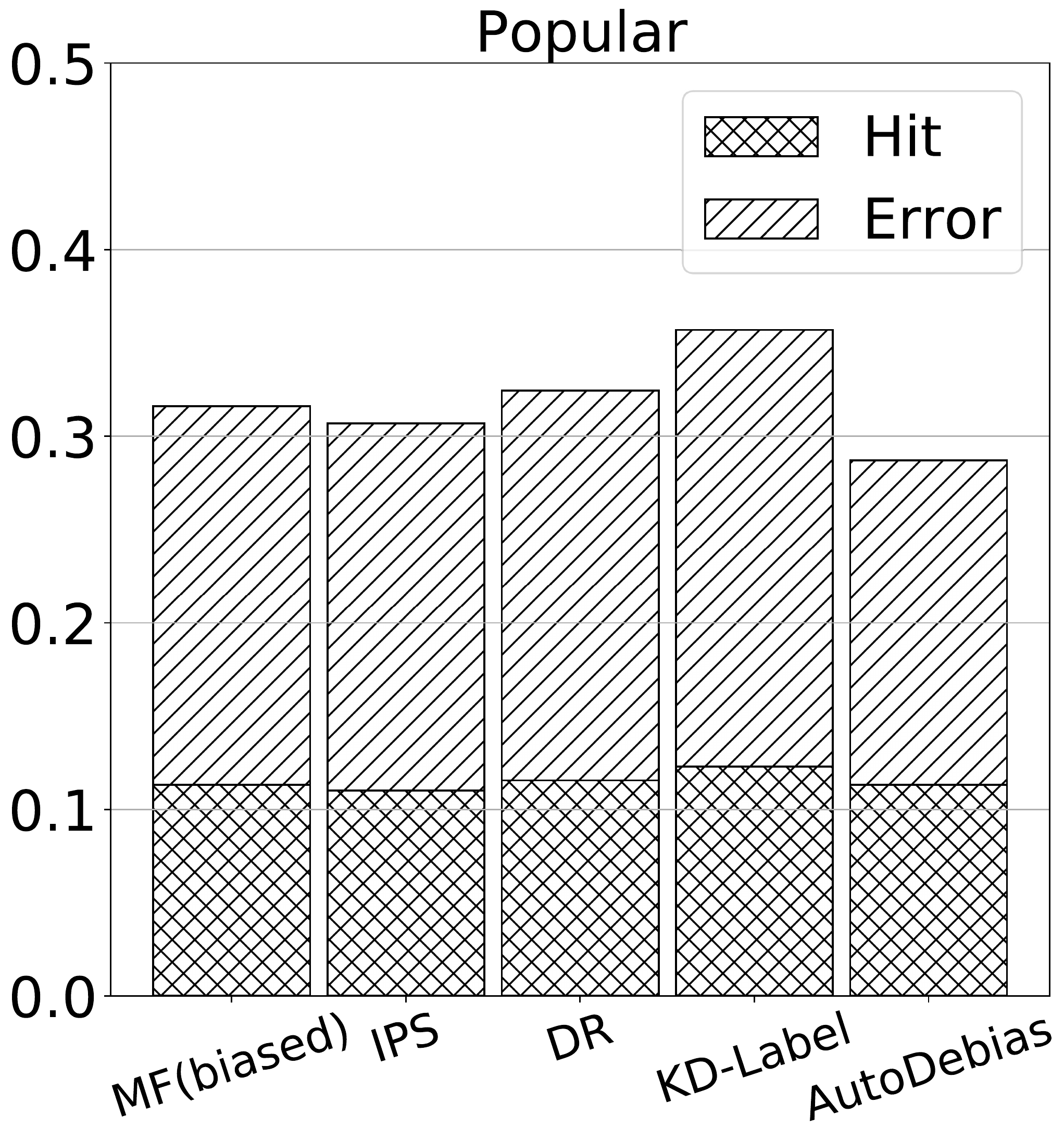}
    % \caption{Recommendation results of popular items. }
    \end{minipage}%
    }%
    \subfigure[]{
    \label{unpopular}
    \begin{minipage}[t]{0.33\linewidth}
    \centering
    \includegraphics[width=1in]{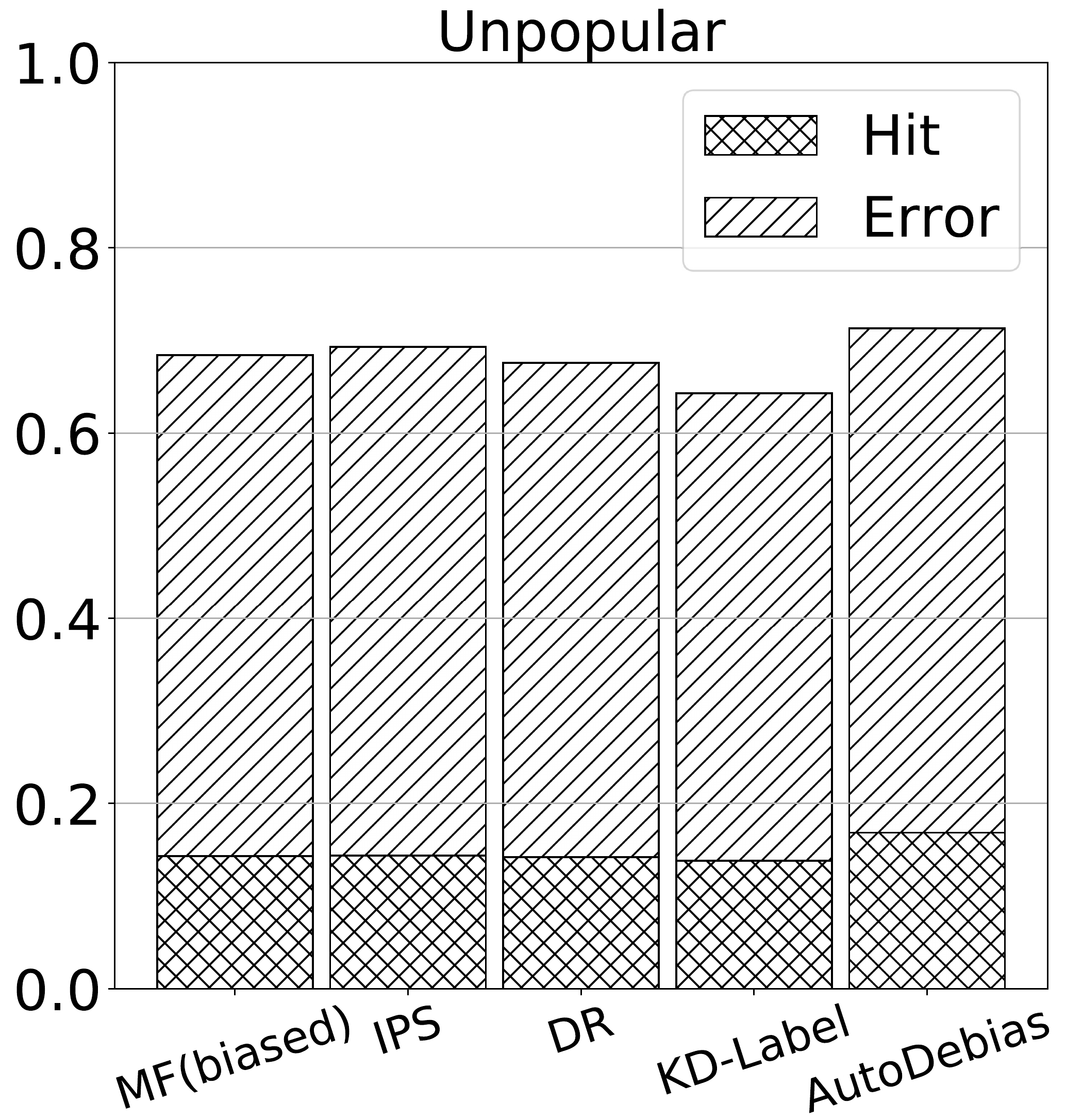}
    % \caption{Recommendation results of unpopular items. }
    \end{minipage}%
    }%
    \centering
    \setlength{\abovecaptionskip}{-0.1cm}
    \caption{(a) The learned $w^{(1)}_i$ for each item $i$ with its popularity; (b) Recommendation results in terms of popular items; (c) Recommendation results in terms of unpopular items. }
     \vspace{-0.3cm}
\end{figure}

\subsection{Performance Comparison on Implicit and Feedback on Lists (RQ4)}
To validate the universality of AutoDebias, we also conduct experiments on implicit feedback and the feedback on lists.

% By masking the negative feedbacks in the training set of the two real datasets, we can do experiment about implicit feedback.

\textbf{Implicit feedback.} We compare AutoDebias with three SOTA methods on exposure bias: WMF, RL-MF~\cite{saito2020unbiased}, and AWMF~\cite{chen2020fast}.
Table~\ref{result of implicit} shows the performance comparison.
As this table shows, AutoDebias performs much better than others, which validates our AutoDebias can handle the exposure bias in implicit feedback.

\textbf{The feedback on recommendation list.} In this scenario,
we add position information in modeling $w_k^{(1)}$, \ie
$w_k^{(1)} = \exp (\varphi_1^T[{\mathbf x_{{u_k}}}
\circ {\mathbf x_{i_k}} \circ {\mathbf e_{r_k} }
\circ {\mathbf e_{p_k}} ])$.
We compare AutoDebias with two SOTA methods: Dual Learning Algorithm (DLA)~\cite{DBLP:conf/sigir/AiBLGC18} which mitigate position bias,
and Heckman Ensemble (HeckE)~\cite{DBLP:conf/www/OvaisiAZVZ20} which mitigates both position bias and selection bias by ensemble.
Table~\ref{result of list} shows the results.
we can find AutoDebias still outperform compared methods. this result verifies that AutoDebias can handle various biases, even their combination.

\begin{table}[t]
    \setlength{\abovecaptionskip}{0.2cm}
 \setlength{\belowcaptionskip}{-0.00cm}
    \caption{The performance on implicit feedback data.}
    \label{result of implicit}
    % \resizebox{.45\textwidth}{!}{%
    \begin{tabular}{c|cc|cc}
    \hline
    \multirow{2}{*}{Method} & \multicolumn{2}{c}{Yahoo!R3-Im}  & \multicolumn{2}{c}{Coat-Im} \\ \cline{2-5}
                & AUC    & NDCG@5       & AUC     & NDCG@5   \\ \hline
    WMF              &  0.635       &     0.547     &    \textbf{0.749}      &      0.521        \\
    RI-MF         &   0.673   &   0.554     &    0.696   &    \textbf{0.527}     \\
    AWMF            &  0.675      &   0.578    &  0.614   &   0.505       \\
    AutoDebias      &   \textbf{0.730}    &   \textbf{0.635}    &   0.746    &   \textbf{0.527}          \\ \hline
\end{tabular}
% }
  \vspace{-0.3cm}
\end{table}

% \begin{figure}[t!]
%     \centering
%     \includegraphics[width=0.45\textwidth]{tri_figure.pdf}
%     \caption{The weight analysis of users and items in IPW on Yahoo!R3.}
%     \label{pop_weight}
% \end{figure}

% \subsection{Performance comparison on list feedback (RQ6)}
% By synthetic dataset (Simulation), we do experiment in the situation with various biases in RS.
% In this situation, the meta model of $w_\phi^{(1)}$ includes position: $w_\phi^{(1)}(u,i,r,p) = \sigma_1(c_u + c_i + c_r + c_p)$
% The description of the Simulation dataset is in Section~\ref{experiment setup}.

% \textbf{Baseline.}

\begin{table}[t]
   \setlength{\abovecaptionskip}{0.2cm}
 \setlength{\belowcaptionskip}{-0.00cm}
    \caption{The performance in the list feedback of compared methods on Simulation dataset.}
    \label{result of list}
    \begin{tabular}{cccc}
    \hline
            & NLL  & AUC    & NDCG@5      \\ \hline
    MF(biased)             &   -0.712   &   0.564    &      0.589     \\
    DLA                     &  -0.698     &   0.567   &     0.593    \\
    HeckE                &   -0.688    &    0.587  &     0.648   \\
    AutoDebias                 &   \textbf{-0.667}    &   \textbf{ 0.634}  &     \textbf{0.707}     \\ \hline
\end{tabular}
\vspace{-0.3cm}
\end{table}

\section{Related work}

\textbf{Bias in recommendation.} Besides the data bias that has been detailed in section 2\&3, two important biases in recommendation results have been studied: (1) When a recommender model is trained on a long-tailed data, popular items are recommended even more frequently than their popularity would warrant \cite{abdollahpouri2020multi}, raising so-called \textit{popularity bias}. The long-tail phenomenon is common in RS data, and ignoring the popularity bias will incur many issues, \eg, hurting user serendipity, making popular items even more popular. To deal with this problem, \cite{DBLP:conf/recsys/AbdollahpouriBM17,DBLP:conf/flairs/WasilewskiH16,DBLP:conf/sigir/ChenXLYSD20} introduced regularizers to guide the model to give more balanced results; \cite{zheng2020disentangling,wei2020model} disentangled the effect of user interest and item popularity with causal inference to address popularity bias. (2) The system systematically and unfairly discriminates against certain individuals or groups of individuals in favor others, raising so-called \textit{unfairness}. Unfairness happens as different user (or item) groups are usually unequally represented in data. When training on such unbalanced data, the models are highly likely to learn these over-represented groups, and potentially discriminates against other under-represented groups \cite{DBLP:conf/www/StoicaRC18,DBLP:conf/recsys/EkstrandTKMK18}. There are four types of strategies on addressing unfairness, including re-balancing\cite{DBLP:conf/kdd/SinghJ18}, regularizer\cite{DBLP:conf/nips/YaoH17}, adversarial learning~\cite{DBLP:conf/wsdm/BeigiMGAN020}, causal inference~\cite{Counterfactual-Fairness}. We encourage the readers refer to the survey~\cite{DBLP:journals/corr/abs-2010-03240} for more details.

\textbf{Meta learning in recommendation.} Meta learning is an automatic learning algorithms that aims at using metadata to improve the performance of existing learning algorithms or to learn (induce) the learning algorithm itself~\cite{DBLP:books/sp/1998TP, DBLP:conf/icann/HochreiterYC01, DBLP:conf/icml/FinnAL17}. There is also some work introducing meta learning in RS. For example, ~\cite{DBLP:conf/kdd/ChenC0GLLW19, DBLP:conf/wsdm/Rendle12} leveraged meta learning to learn a finer-grained regularization parameters; ~\cite{zhang2020retrain} leveraged meta-learning to guide the re-training of a recommendation model towards better performance; Meta-learning also has been utilized to address the cold-start problem \cite{lu2020meta}.

\section{Conclusion and Future work}

This paper develops a universal debiasing framework that not only can well account for multiple biases and their combinations, but also frees
human efforts to identify biases and tune the configurations.
We first formulate various data biases as a case of risk discrepancy, and then derive a general learning
framework that subsumes most debiasing strategies. We further propose a meta-learning-based algorithm to adaptively learn the optimal debiasing configurations from uniform data. Both theoretical and empirical analyses have been conducted to validate the effectiveness of our proposal.

One interesting direction for future work is to explore more sophisticate meta model, which could capture more complex patterns and potentially achieve better performance than linear meta model. Also, note that in real-world biases are usually dynamic rather than static. it will be valuable to explore how bias evolves with the time goes by and develop a universal solutions for dynamic biases.

\begin{acks}
This work is supported by the National Natural Science Foundation of China (U19A2079, 61972372),  National Key Research and Development Program of China (2020AAA0106000), USTC Research Funds of the Double First-Class Initiative (WK2100000019), and the Alibaba Group through Alibaba Innovative Research Program. 
\end{acks}

% 讨论。
%%
%% The next two lines define the bibliography style to be used, and
%% the bibliography file.
\bibliographystyle{ACM-Reference-Format}
\balance 
\bibliography{sigproc}

%%% -*-BibTeX-*-
%%% Do NOT edit. File created by BibTeX with style
%%% ACM-Reference-Format-Journals [18-Jan-2012].

\begin{thebibliography}{54}

%%% ====================================================================
%%% NOTE TO THE USER: you can override these defaults by providing
%%% customized versions of any of these macros before the \bibliography
%%% command.  Each of them MUST provide its own final punctuation,
%%% except for \shownote{}, \showDOI{}, and \showURL{}.  The latter two
%%% do not use final punctuation, in order to avoid confusing it with
%%% the Web address.
%%%
%%% To suppress output of a particular field, define its macro to expand
%%% to an empty string, or better, \unskip, like this:
%%%
%%% \newcommand{\showDOI}[1]{\unskip}   % LaTeX syntax
%%%
%%% \def \showDOI #1{\unskip}           % plain TeX syntax
%%%
%%% ====================================================================

\ifx \showCODEN    \undefined \def \showCODEN     #1{\unskip}     \fi
\ifx \showDOI      \undefined \def \showDOI       #1{#1}\fi
\ifx \showISBNx    \undefined \def \showISBNx     #1{\unskip}     \fi
\ifx \showISBNxiii \undefined \def \showISBNxiii  #1{\unskip}     \fi
\ifx \showISSN     \undefined \def \showISSN      #1{\unskip}     \fi
\ifx \showLCCN     \undefined \def \showLCCN      #1{\unskip}     \fi
\ifx \shownote     \undefined \def \shownote      #1{#1}          \fi
\ifx \showarticletitle \undefined \def \showarticletitle #1{#1}   \fi
\ifx \showURL      \undefined \def \showURL       {\relax}        \fi
% The following commands are used for tagged output and should be
% invisible to TeX
\providecommand\bibfield[2]{#2}
\providecommand\bibinfo[2]{#2}
\providecommand\natexlab[1]{#1}
\providecommand\showeprint[2][]{arXiv:#2}

\bibitem[\protect\citeauthoryear{Abdollahpouri, Burke, and
  Mobasher}{Abdollahpouri et~al\mbox{.}}{2017}]%
        {DBLP:conf/recsys/AbdollahpouriBM17}
\bibfield{author}{\bibinfo{person}{Himan Abdollahpouri}, \bibinfo{person}{Robin
  Burke}, {and} \bibinfo{person}{Bamshad Mobasher}.}
  \bibinfo{year}{2017}\natexlab{}.
\newblock \showarticletitle{Controlling Popularity Bias in Learning-to-Rank
  Recommendation}. In \bibinfo{booktitle}{\emph{RecSys}}.
  \bibinfo{pages}{42--46}.
\newblock


\bibitem[\protect\citeauthoryear{Abdollahpouri and Mansoury}{Abdollahpouri and
  Mansoury}{2020}]%
        {abdollahpouri2020multi}
\bibfield{author}{\bibinfo{person}{Himan Abdollahpouri} {and}
  \bibinfo{person}{Masoud Mansoury}.} \bibinfo{year}{2020}\natexlab{}.
\newblock \showarticletitle{Multi-sided exposure bias in recommendation}.
\newblock \bibinfo{journal}{\emph{arXiv preprint arXiv:2006.15772}}
  (\bibinfo{year}{2020}).
\newblock


\bibitem[\protect\citeauthoryear{Ai, Bi, Luo, Guo, and Croft}{Ai
  et~al\mbox{.}}{2018}]%
        {DBLP:conf/sigir/AiBLGC18}
\bibfield{author}{\bibinfo{person}{Qingyao Ai}, \bibinfo{person}{Keping Bi},
  \bibinfo{person}{Cheng Luo}, \bibinfo{person}{Jiafeng Guo}, {and}
  \bibinfo{person}{W.~Bruce Croft}.} \bibinfo{year}{2018}\natexlab{}.
\newblock \showarticletitle{Unbiased Learning to Rank with Unbiased Propensity
  Estimation}. In \bibinfo{booktitle}{\emph{SIGIR}}. \bibinfo{pages}{385--394}.
\newblock


\bibitem[\protect\citeauthoryear{Beigi, Mosallanezhad, Guo, Alvari, Nou, and
  Liu}{Beigi et~al\mbox{.}}{2020}]%
        {DBLP:conf/wsdm/BeigiMGAN020}
\bibfield{author}{\bibinfo{person}{Ghazaleh Beigi}, \bibinfo{person}{Ahmadreza
  Mosallanezhad}, \bibinfo{person}{Ruocheng Guo}, \bibinfo{person}{Hamidreza
  Alvari}, \bibinfo{person}{Alexander Nou}, {and} \bibinfo{person}{Huan Liu}.}
  \bibinfo{year}{2020}\natexlab{}.
\newblock \showarticletitle{Privacy-Aware Recommendation with Private-Attribute
  Protection using Adversarial Learning}. In
  \bibinfo{booktitle}{\emph{{WSDM}}}. \bibinfo{pages}{34--42}.
\newblock


\bibitem[\protect\citeauthoryear{Chen, Dong, Wang, Feng, Wang, and He}{Chen
  et~al\mbox{.}}{2020a}]%
        {DBLP:journals/corr/abs-2010-03240}
\bibfield{author}{\bibinfo{person}{Jiawei Chen}, \bibinfo{person}{Hande Dong},
  \bibinfo{person}{Xiang Wang}, \bibinfo{person}{Fuli Feng},
  \bibinfo{person}{Meng Wang}, {and} \bibinfo{person}{Xiangnan He}.}
  \bibinfo{year}{2020}\natexlab{a}.
\newblock \showarticletitle{Bias and Debias in Recommender System: A Survey and
  Future Directions}.
\newblock \bibinfo{journal}{\emph{arXiv preprint arXiv:2010.03240}}
  (\bibinfo{year}{2020}).
\newblock


\bibitem[\protect\citeauthoryear{Chen, Feng, Ester, Zhou, Chen, and Wang}{Chen
  et~al\mbox{.}}{2018a}]%
        {chen2018modeling}
\bibfield{author}{\bibinfo{person}{Jiawei Chen}, \bibinfo{person}{Yan Feng},
  \bibinfo{person}{Martin Ester}, \bibinfo{person}{Sheng Zhou},
  \bibinfo{person}{Chun Chen}, {and} \bibinfo{person}{Can Wang}.}
  \bibinfo{year}{2018}\natexlab{a}.
\newblock \showarticletitle{Modeling Users' Exposure with Social Knowledge
  Influence and Consumption Influence for Recommendation}. In
  \bibinfo{booktitle}{\emph{CIKM}}. \bibinfo{pages}{953--962}.
\newblock


\bibitem[\protect\citeauthoryear{Chen, Wang, Ester, Shi, Feng, and Chen}{Chen
  et~al\mbox{.}}{2018b}]%
        {DBLP:conf/icdm/Chen0ESFC18}
\bibfield{author}{\bibinfo{person}{Jiawei Chen}, \bibinfo{person}{Can Wang},
  \bibinfo{person}{Martin Ester}, \bibinfo{person}{Qihao Shi},
  \bibinfo{person}{Yan Feng}, {and} \bibinfo{person}{Chun Chen}.}
  \bibinfo{year}{2018}\natexlab{b}.
\newblock \showarticletitle{Social Recommendation with Missing Not at Random
  Data}. In \bibinfo{booktitle}{\emph{ICDM}}. \bibinfo{pages}{29--38}.
\newblock


\bibitem[\protect\citeauthoryear{Chen, Wang, Zhou, Shi, Chen, Feng, and
  Chen}{Chen et~al\mbox{.}}{2020b}]%
        {chen2020fast}
\bibfield{author}{\bibinfo{person}{Jiawei Chen}, \bibinfo{person}{Can Wang},
  \bibinfo{person}{Sheng Zhou}, \bibinfo{person}{Qihao Shi},
  \bibinfo{person}{Jingbang Chen}, \bibinfo{person}{Yan Feng}, {and}
  \bibinfo{person}{Chun Chen}.} \bibinfo{year}{2020}\natexlab{b}.
\newblock \showarticletitle{Fast Adaptively Weighted Matrix Factorization for
  Recommendation with Implicit Feedback}. In \bibinfo{booktitle}{\emph{AAAI}}.
  \bibinfo{pages}{3470--3477}.
\newblock


\bibitem[\protect\citeauthoryear{Chen, Wang, Zhou, Shi, Feng, and Chen}{Chen
  et~al\mbox{.}}{2019b}]%
        {chen2019samwalker}
\bibfield{author}{\bibinfo{person}{Jiawei Chen}, \bibinfo{person}{Can Wang},
  \bibinfo{person}{Sheng Zhou}, \bibinfo{person}{Qihao Shi},
  \bibinfo{person}{Yan Feng}, {and} \bibinfo{person}{Chun Chen}.}
  \bibinfo{year}{2019}\natexlab{b}.
\newblock \showarticletitle{SamWalker: Social Recommendation with Informative
  Sampling Strategy}. In \bibinfo{booktitle}{\emph{WWW}}.
  \bibinfo{pages}{228--239}.
\newblock


\bibitem[\protect\citeauthoryear{Chen, Chen, He, Gao, Li, Lou, and Wang}{Chen
  et~al\mbox{.}}{2019a}]%
        {DBLP:conf/kdd/ChenC0GLLW19}
\bibfield{author}{\bibinfo{person}{Yihong Chen}, \bibinfo{person}{Bei Chen},
  \bibinfo{person}{Xiangnan He}, \bibinfo{person}{Chen Gao},
  \bibinfo{person}{Yong Li}, \bibinfo{person}{Jian{-}Guang Lou}, {and}
  \bibinfo{person}{Yue Wang}.} \bibinfo{year}{2019}\natexlab{a}.
\newblock \showarticletitle{{\(\lambda\)}Opt: Learn to Regularize Recommender
  Models in Finer Levels}. In \bibinfo{booktitle}{\emph{KDD}}.
  \bibinfo{pages}{978--986}.
\newblock


\bibitem[\protect\citeauthoryear{Chen, Xiao, Li, Ye, Sun, and Deng}{Chen
  et~al\mbox{.}}{2020c}]%
        {DBLP:conf/sigir/ChenXLYSD20}
\bibfield{author}{\bibinfo{person}{Zhihong Chen}, \bibinfo{person}{Rong Xiao},
  \bibinfo{person}{Chenliang Li}, \bibinfo{person}{Gangfeng Ye},
  \bibinfo{person}{Haochuan Sun}, {and} \bibinfo{person}{Hongbo Deng}.}
  \bibinfo{year}{2020}\natexlab{c}.
\newblock \showarticletitle{{ESAM:} Discriminative Domain Adaptation with
  Non-Displayed Items to Improve Long-Tail Performance}. In
  \bibinfo{booktitle}{\emph{SIGIR}}. \bibinfo{pages}{579--588}.
\newblock


\bibitem[\protect\citeauthoryear{Craswell, Zoeter, Taylor, and Ramsey}{Craswell
  et~al\mbox{.}}{2008}]%
        {craswell2008experimental}
\bibfield{author}{\bibinfo{person}{Nick Craswell}, \bibinfo{person}{Onno
  Zoeter}, \bibinfo{person}{Michael Taylor}, {and} \bibinfo{person}{Bill
  Ramsey}.} \bibinfo{year}{2008}\natexlab{}.
\newblock \showarticletitle{An experimental comparison of click position-bias
  models}. In \bibinfo{booktitle}{\emph{WSDM}}. \bibinfo{pages}{87--94}.
\newblock


\bibitem[\protect\citeauthoryear{Ekstrand, Tian, Kazi, Mehrpouyan, and
  Kluver}{Ekstrand et~al\mbox{.}}{2018}]%
        {DBLP:conf/recsys/EkstrandTKMK18}
\bibfield{author}{\bibinfo{person}{Michael~D. Ekstrand}, \bibinfo{person}{Mucun
  Tian}, \bibinfo{person}{Mohammed R.~Imran Kazi}, \bibinfo{person}{Hoda
  Mehrpouyan}, {and} \bibinfo{person}{Daniel Kluver}.}
  \bibinfo{year}{2018}\natexlab{}.
\newblock \showarticletitle{Exploring author gender in book rating and
  recommendation}. In \bibinfo{booktitle}{\emph{RecSys}}.
  \bibinfo{pages}{242--250}.
\newblock


\bibitem[\protect\citeauthoryear{Finn, Abbeel, and Levine}{Finn
  et~al\mbox{.}}{2017}]%
        {DBLP:conf/icml/FinnAL17}
\bibfield{author}{\bibinfo{person}{Chelsea Finn}, \bibinfo{person}{Pieter
  Abbeel}, {and} \bibinfo{person}{Sergey Levine}.}
  \bibinfo{year}{2017}\natexlab{}.
\newblock \showarticletitle{Model-Agnostic Meta-Learning for Fast Adaptation of
  Deep Networks}. In \bibinfo{booktitle}{\emph{ICML}}.
  \bibinfo{pages}{1126--1135}.
\newblock


\bibitem[\protect\citeauthoryear{Gleich and Lim}{Gleich and Lim}{2011}]%
        {DBLP:conf/kdd/GleichL11}
\bibfield{author}{\bibinfo{person}{David~F. Gleich} {and}
  \bibinfo{person}{Lek{-}Heng Lim}.} \bibinfo{year}{2011}\natexlab{}.
\newblock \showarticletitle{Rank aggregation via nuclear norm minimization}. In
  \bibinfo{booktitle}{\emph{KDD}}. \bibinfo{pages}{60--68}.
\newblock


\bibitem[\protect\citeauthoryear{Haussler}{Haussler}{1990}]%
        {haussler1990probably}
\bibfield{author}{\bibinfo{person}{David Haussler}.}
  \bibinfo{year}{1990}\natexlab{}.
\newblock \bibinfo{booktitle}{\emph{Probably approximately correct learning}}.
\newblock \bibinfo{publisher}{University of California, Santa Cruz, Computer
  Research Laboratory}.
\newblock


\bibitem[\protect\citeauthoryear{He, Deng, Wang, Li, Zhang, and Wang}{He
  et~al\mbox{.}}{2020}]%
        {he2020lightgcn}
\bibfield{author}{\bibinfo{person}{Xiangnan He}, \bibinfo{person}{Kuan Deng},
  \bibinfo{person}{Xiang Wang}, \bibinfo{person}{Yan Li},
  \bibinfo{person}{Yongdong Zhang}, {and} \bibinfo{person}{Meng Wang}.}
  \bibinfo{year}{2020}\natexlab{}.
\newblock \showarticletitle{LightGCN: Simplifying and Powering Graph
  Convolution Network for Recommendation}. In
  \bibinfo{booktitle}{\emph{SIGIR}}. \bibinfo{pages}{639--648}.
\newblock


\bibitem[\protect\citeauthoryear{He, Liao, Zhang, Nie, Hu, and Chua}{He
  et~al\mbox{.}}{2017}]%
        {DBLP:conf/www/HeLZNHC17}
\bibfield{author}{\bibinfo{person}{Xiangnan He}, \bibinfo{person}{Lizi Liao},
  \bibinfo{person}{Hanwang Zhang}, \bibinfo{person}{Liqiang Nie},
  \bibinfo{person}{Xia Hu}, {and} \bibinfo{person}{Tat{-}Seng Chua}.}
  \bibinfo{year}{2017}\natexlab{}.
\newblock \showarticletitle{Neural Collaborative Filtering}. In
  \bibinfo{booktitle}{\emph{WWW}}. \bibinfo{pages}{173--182}.
\newblock


\bibitem[\protect\citeauthoryear{Hern{\'{a}}ndez{-}Lobato, Houlsby, and
  Ghahramani}{Hern{\'{a}}ndez{-}Lobato et~al\mbox{.}}{2014}]%
        {DBLP:conf/icml/Hernandez-LobatoHG14a}
\bibfield{author}{\bibinfo{person}{Jos{\'{e}}~Miguel Hern{\'{a}}ndez{-}Lobato},
  \bibinfo{person}{Neil Houlsby}, {and} \bibinfo{person}{Zoubin Ghahramani}.}
  \bibinfo{year}{2014}\natexlab{}.
\newblock \showarticletitle{Probabilistic Matrix Factorization with Non-random
  Missing Data}. In \bibinfo{booktitle}{\emph{ICML}}.
  \bibinfo{pages}{1512--1520}.
\newblock


\bibitem[\protect\citeauthoryear{Hochreiter, Younger, and Conwell}{Hochreiter
  et~al\mbox{.}}{2001}]%
        {DBLP:conf/icann/HochreiterYC01}
\bibfield{author}{\bibinfo{person}{Sepp Hochreiter}, \bibinfo{person}{A.~Steven
  Younger}, {and} \bibinfo{person}{Peter~R. Conwell}.}
  \bibinfo{year}{2001}\natexlab{}.
\newblock \showarticletitle{Learning to Learn Using Gradient Descent}. In
  \bibinfo{booktitle}{\emph{ICANN}}. \bibinfo{pages}{87--94}.
\newblock


\bibitem[\protect\citeauthoryear{Hu, Koren, and Volinsky}{Hu
  et~al\mbox{.}}{2008}]%
        {DBLP:conf/icdm/HuKV08}
\bibfield{author}{\bibinfo{person}{Yifan Hu}, \bibinfo{person}{Yehuda Koren},
  {and} \bibinfo{person}{Chris Volinsky}.} \bibinfo{year}{2008}\natexlab{}.
\newblock \showarticletitle{Collaborative Filtering for Implicit Feedback
  Datasets}. In \bibinfo{booktitle}{\emph{ICDM}}. \bibinfo{pages}{263--272}.
\newblock


\bibitem[\protect\citeauthoryear{Joachims, Granka, Pan, Hembrooke, and
  Gay}{Joachims et~al\mbox{.}}{2017}]%
        {DBLP:journals/sigir/JoachimsGPHG17}
\bibfield{author}{\bibinfo{person}{Thorsten Joachims},
  \bibinfo{person}{Laura~A. Granka}, \bibinfo{person}{Bing Pan},
  \bibinfo{person}{Helene Hembrooke}, {and} \bibinfo{person}{Geri Gay}.}
  \bibinfo{year}{2017}\natexlab{}.
\newblock \showarticletitle{Accurately Interpreting Clickthrough Data as
  Implicit Feedback}. In \bibinfo{booktitle}{\emph{SIGIR}}.
  \bibinfo{pages}{4--11}.
\newblock


\bibitem[\protect\citeauthoryear{Joachims, Granka, Pan, Hembrooke, Radlinski,
  and Gay}{Joachims et~al\mbox{.}}{2007}]%
        {DBLP:journals/tois/JoachimsGPHRG07}
\bibfield{author}{\bibinfo{person}{Thorsten Joachims},
  \bibinfo{person}{Laura~A. Granka}, \bibinfo{person}{Bing Pan},
  \bibinfo{person}{Helene Hembrooke}, \bibinfo{person}{Filip Radlinski}, {and}
  \bibinfo{person}{Geri Gay}.} \bibinfo{year}{2007}\natexlab{}.
\newblock \showarticletitle{Evaluating the accuracy of implicit feedback from
  clicks and query reformulations in Web search}.
\newblock \bibinfo{journal}{\emph{TOIS}} \bibinfo{volume}{25},
  \bibinfo{number}{2} (\bibinfo{year}{2007}), \bibinfo{pages}{7}.
\newblock


\bibitem[\protect\citeauthoryear{Kamishima, Akaho, Asoh, and Sakuma}{Kamishima
  et~al\mbox{.}}{2014}]%
        {DBLP:conf/recsys/KamishimaAAS14}
\bibfield{author}{\bibinfo{person}{Toshihiro Kamishima},
  \bibinfo{person}{Shotaro Akaho}, \bibinfo{person}{Hideki Asoh}, {and}
  \bibinfo{person}{Jun Sakuma}.} \bibinfo{year}{2014}\natexlab{}.
\newblock \showarticletitle{Correcting Popularity Bias by Enhancing
  Recommendation Neutrality}. In \bibinfo{booktitle}{\emph{RecSys}},
  Vol.~\bibinfo{volume}{1247}.
\newblock


\bibitem[\protect\citeauthoryear{Krishnan, Patel, Franklin, and
  Goldberg}{Krishnan et~al\mbox{.}}{2014}]%
        {krishnan2014methodology}
\bibfield{author}{\bibinfo{person}{Sanjay Krishnan}, \bibinfo{person}{Jay
  Patel}, \bibinfo{person}{Michael~J Franklin}, {and} \bibinfo{person}{Ken
  Goldberg}.} \bibinfo{year}{2014}\natexlab{}.
\newblock \showarticletitle{A methodology for learning, analyzing, and
  mitigating social influence bias in recommender systems}. In
  \bibinfo{booktitle}{\emph{RecSys}}. \bibinfo{pages}{137--144}.
\newblock


\bibitem[\protect\citeauthoryear{Kusner, Loftus, Russell, and Silva}{Kusner
  et~al\mbox{.}}{2017}]%
        {Counterfactual-Fairness}
\bibfield{author}{\bibinfo{person}{Matt~J. Kusner}, \bibinfo{person}{Joshua~R.
  Loftus}, \bibinfo{person}{Chris Russell}, {and} \bibinfo{person}{Ricardo
  Silva}.} \bibinfo{year}{2017}\natexlab{}.
\newblock \showarticletitle{Counterfactual Fairness}. In
  \bibinfo{booktitle}{\emph{NeurIPS}}. \bibinfo{pages}{4066--4076}.
\newblock


\bibitem[\protect\citeauthoryear{Liang, Charlin, McInerney, and Blei}{Liang
  et~al\mbox{.}}{2016}]%
        {DBLP:conf/www/LiangCMB16}
\bibfield{author}{\bibinfo{person}{Dawen Liang}, \bibinfo{person}{Laurent
  Charlin}, \bibinfo{person}{James McInerney}, {and} \bibinfo{person}{David~M.
  Blei}.} \bibinfo{year}{2016}\natexlab{}.
\newblock \showarticletitle{Modeling User Exposure in Recommendation}. In
  \bibinfo{booktitle}{\emph{WWW}}. \bibinfo{pages}{951--961}.
\newblock


\bibitem[\protect\citeauthoryear{Liu, Cheng, Dong, He, Pan, and Ming}{Liu
  et~al\mbox{.}}{2020}]%
        {DBLP:conf/sigir/LiuCDHP020}
\bibfield{author}{\bibinfo{person}{Dugang Liu}, \bibinfo{person}{Pengxiang
  Cheng}, \bibinfo{person}{Zhenhua Dong}, \bibinfo{person}{Xiuqiang He},
  \bibinfo{person}{Weike Pan}, {and} \bibinfo{person}{Zhong Ming}.}
  \bibinfo{year}{2020}\natexlab{}.
\newblock \showarticletitle{A General Knowledge Distillation Framework for
  Counterfactual Recommendation via Uniform Data}. In
  \bibinfo{booktitle}{\emph{SIGIR}}. \bibinfo{pages}{831--840}.
\newblock


\bibitem[\protect\citeauthoryear{Liu, Cao, and Yu}{Liu et~al\mbox{.}}{2016}]%
        {DBLP:conf/recsys/LiuCY16}
\bibfield{author}{\bibinfo{person}{Yiming Liu}, \bibinfo{person}{Xuezhi Cao},
  {and} \bibinfo{person}{Yong Yu}.} \bibinfo{year}{2016}\natexlab{}.
\newblock \showarticletitle{Are You Influenced by Others When Rating? Improve
  Rating Prediction by Conformity Modeling}. In
  \bibinfo{booktitle}{\emph{RecSys}}. \bibinfo{pages}{269--272}.
\newblock


\bibitem[\protect\citeauthoryear{Lu, Fang, and Shi}{Lu et~al\mbox{.}}{2020}]%
        {lu2020meta}
\bibfield{author}{\bibinfo{person}{Yuanfu Lu}, \bibinfo{person}{Yuan Fang},
  {and} \bibinfo{person}{Chuan Shi}.} \bibinfo{year}{2020}\natexlab{}.
\newblock \showarticletitle{Meta-learning on heterogeneous information networks
  for cold-start recommendation}. In \bibinfo{booktitle}{\emph{KDD}}.
  \bibinfo{pages}{1563--1573}.
\newblock


\bibitem[\protect\citeauthoryear{Ma, King, and Lyu}{Ma et~al\mbox{.}}{2009}]%
        {DBLP:conf/sigir/MaKL09}
\bibfield{author}{\bibinfo{person}{Hao Ma}, \bibinfo{person}{Irwin King}, {and}
  \bibinfo{person}{Michael~R. Lyu}.} \bibinfo{year}{2009}\natexlab{}.
\newblock \showarticletitle{Learning to recommend with social trust ensemble}.
  In \bibinfo{booktitle}{\emph{SIGIR}}. \bibinfo{pages}{203--210}.
\newblock


\bibitem[\protect\citeauthoryear{Marlin and Zemel}{Marlin and Zemel}{2009}]%
        {DBLP:conf/recsys/MarlinZ09}
\bibfield{author}{\bibinfo{person}{Benjamin~M. Marlin} {and}
  \bibinfo{person}{Richard~S. Zemel}.} \bibinfo{year}{2009}\natexlab{}.
\newblock \showarticletitle{Collaborative prediction and ranking with
  non-random missing data}. In \bibinfo{booktitle}{\emph{RecSys}}.
  \bibinfo{pages}{5--12}.
\newblock


\bibitem[\protect\citeauthoryear{Marlin, Zemel, Roweis, and Slaney}{Marlin
  et~al\mbox{.}}{2007}]%
        {DBLP:conf/uai/MarlinZRS07}
\bibfield{author}{\bibinfo{person}{Benjamin~M. Marlin},
  \bibinfo{person}{Richard~S. Zemel}, \bibinfo{person}{Sam~T. Roweis}, {and}
  \bibinfo{person}{Malcolm Slaney}.} \bibinfo{year}{2007}\natexlab{}.
\newblock \showarticletitle{Collaborative Filtering and the Missing at Random
  Assumption}. In \bibinfo{booktitle}{\emph{UAI}}. \bibinfo{pages}{267--275}.
\newblock


\bibitem[\protect\citeauthoryear{Ovaisi, Ahsan, Zhang, Vasilaky, and
  Zheleva}{Ovaisi et~al\mbox{.}}{2020}]%
        {DBLP:conf/www/OvaisiAZVZ20}
\bibfield{author}{\bibinfo{person}{Zohreh Ovaisi}, \bibinfo{person}{Ragib
  Ahsan}, \bibinfo{person}{Yifan Zhang}, \bibinfo{person}{Kathryn Vasilaky},
  {and} \bibinfo{person}{Elena Zheleva}.} \bibinfo{year}{2020}\natexlab{}.
\newblock \showarticletitle{Correcting for Selection Bias in Learning-to-rank
  Systems}. In \bibinfo{booktitle}{\emph{WWW}}. \bibinfo{pages}{1863--1873}.
\newblock


\bibitem[\protect\citeauthoryear{Rendle}{Rendle}{2012a}]%
        {rendle2012factorization}
\bibfield{author}{\bibinfo{person}{Steffen Rendle}.}
  \bibinfo{year}{2012}\natexlab{a}.
\newblock \showarticletitle{Factorization machines with libfm}.
\newblock \bibinfo{journal}{\emph{TIST}} \bibinfo{volume}{3},
  \bibinfo{number}{3} (\bibinfo{year}{2012}), \bibinfo{pages}{57:1--57:22}.
\newblock


\bibitem[\protect\citeauthoryear{Rendle}{Rendle}{2012b}]%
        {DBLP:conf/wsdm/Rendle12}
\bibfield{author}{\bibinfo{person}{Steffen Rendle}.}
  \bibinfo{year}{2012}\natexlab{b}.
\newblock \showarticletitle{Learning recommender systems with adaptive
  regularization}. In \bibinfo{booktitle}{\emph{WSDM}}.
  \bibinfo{pages}{133--142}.
\newblock


\bibitem[\protect\citeauthoryear{Saito}{Saito}{2020}]%
        {saito2020asymmetric}
\bibfield{author}{\bibinfo{person}{Yuta Saito}.}
  \bibinfo{year}{2020}\natexlab{}.
\newblock \showarticletitle{Asymmetric Tri-training for Debiasing
  Missing-Not-At-Random Explicit Feedback}. In
  \bibinfo{booktitle}{\emph{SIGIR}}. \bibinfo{pages}{309--318}.
\newblock


\bibitem[\protect\citeauthoryear{Saito, Yaginuma, Nishino, Sakata, and
  Nakata}{Saito et~al\mbox{.}}{2020}]%
        {saito2020unbiased}
\bibfield{author}{\bibinfo{person}{Yuta Saito}, \bibinfo{person}{Suguru
  Yaginuma}, \bibinfo{person}{Yuta Nishino}, \bibinfo{person}{Hayato Sakata},
  {and} \bibinfo{person}{Kazuhide Nakata}.} \bibinfo{year}{2020}\natexlab{}.
\newblock \showarticletitle{Unbiased Recommender Learning from
  Missing-Not-At-Random Implicit Feedback}. In
  \bibinfo{booktitle}{\emph{WSDM}}. \bibinfo{pages}{501--509}.
\newblock


\bibitem[\protect\citeauthoryear{Schnabel, Swaminathan, Singh, Chandak, and
  Joachims}{Schnabel et~al\mbox{.}}{2016}]%
        {DBLP:conf/icml/SchnabelSSCJ16}
\bibfield{author}{\bibinfo{person}{Tobias Schnabel}, \bibinfo{person}{Adith
  Swaminathan}, \bibinfo{person}{Ashudeep Singh}, \bibinfo{person}{Navin
  Chandak}, {and} \bibinfo{person}{Thorsten Joachims}.}
  \bibinfo{year}{2016}\natexlab{}.
\newblock \showarticletitle{Recommendations as Treatments: Debiasing Learning
  and Evaluation}. In \bibinfo{booktitle}{\emph{ICML}}.
  \bibinfo{pages}{1670--1679}.
\newblock


\bibitem[\protect\citeauthoryear{Shalev-Shwartz and Ben-David}{Shalev-Shwartz
  and Ben-David}{2014}]%
        {shalev2014understanding}
\bibfield{author}{\bibinfo{person}{Shai Shalev-Shwartz} {and}
  \bibinfo{person}{Shai Ben-David}.} \bibinfo{year}{2014}\natexlab{}.
\newblock \bibinfo{booktitle}{\emph{Understanding machine learning: From theory
  to algorithms}}.
\newblock \bibinfo{publisher}{Cambridge university press}.
\newblock


\bibitem[\protect\citeauthoryear{Singh and Joachims}{Singh and
  Joachims}{2018}]%
        {DBLP:conf/kdd/SinghJ18}
\bibfield{author}{\bibinfo{person}{Ashudeep Singh} {and}
  \bibinfo{person}{Thorsten Joachims}.} \bibinfo{year}{2018}\natexlab{}.
\newblock \showarticletitle{Fairness of Exposure in Rankings}. In
  \bibinfo{booktitle}{\emph{KDD}}. \bibinfo{pages}{2219--2228}.
\newblock


\bibitem[\protect\citeauthoryear{Steck}{Steck}{2013}]%
        {DBLP:conf/recsys/Steck13}
\bibfield{author}{\bibinfo{person}{Harald Steck}.}
  \bibinfo{year}{2013}\natexlab{}.
\newblock \showarticletitle{Evaluation of recommendations: rating-prediction
  and ranking}. In \bibinfo{booktitle}{\emph{RecSys}}.
  \bibinfo{pages}{213--220}.
\newblock


\bibitem[\protect\citeauthoryear{Stoica, Riederer, and Chaintreau}{Stoica
  et~al\mbox{.}}{2018}]%
        {DBLP:conf/www/StoicaRC18}
\bibfield{author}{\bibinfo{person}{Ana{-}Andreea Stoica},
  \bibinfo{person}{Christopher~J. Riederer}, {and} \bibinfo{person}{Augustin
  Chaintreau}.} \bibinfo{year}{2018}\natexlab{}.
\newblock \showarticletitle{Algorithmic Glass Ceiling in Social Networks: The
  effects of social recommendations on network diversity}. In
  \bibinfo{booktitle}{\emph{{WWW}}}. \bibinfo{pages}{923--932}.
\newblock


\bibitem[\protect\citeauthoryear{Sun, Liu, Wu, Pei, Lin, Ou, and Jiang}{Sun
  et~al\mbox{.}}{2019b}]%
        {DBLP:conf/cikm/SunLWPLOJ19}
\bibfield{author}{\bibinfo{person}{Fei Sun}, \bibinfo{person}{Jun Liu},
  \bibinfo{person}{Jian Wu}, \bibinfo{person}{Changhua Pei},
  \bibinfo{person}{Xiao Lin}, \bibinfo{person}{Wenwu Ou}, {and}
  \bibinfo{person}{Peng Jiang}.} \bibinfo{year}{2019}\natexlab{b}.
\newblock \showarticletitle{BERT4Rec: Sequential Recommendation with
  Bidirectional Encoder Representations from Transformer}. In
  \bibinfo{booktitle}{\emph{CIKM}}. \bibinfo{pages}{1441--1450}.
\newblock


\bibitem[\protect\citeauthoryear{Sun, Khenissi, Nasraoui, and Shafto}{Sun
  et~al\mbox{.}}{2019a}]%
        {DBLP:conf/www/SunKNS19}
\bibfield{author}{\bibinfo{person}{Wenlong Sun}, \bibinfo{person}{Sami
  Khenissi}, \bibinfo{person}{Olfa Nasraoui}, {and} \bibinfo{person}{Patrick
  Shafto}.} \bibinfo{year}{2019}\natexlab{a}.
\newblock \showarticletitle{Debiasing the Human-Recommender System Feedback
  Loop in Collaborative Filtering}. In \bibinfo{booktitle}{\emph{WWW}}.
  \bibinfo{pages}{645--651}.
\newblock


\bibitem[\protect\citeauthoryear{Thrun and Pratt}{Thrun and Pratt}{1998}]%
        {DBLP:books/sp/1998TP}
\bibfield{editor}{\bibinfo{person}{Sebastian Thrun} {and}
  \bibinfo{person}{Lorien~Y. Pratt}} (Eds.). \bibinfo{year}{1998}\natexlab{}.
\newblock \bibinfo{booktitle}{\emph{Learning to Learn}}.
\newblock \bibinfo{publisher}{Springer}.
\newblock


\bibitem[\protect\citeauthoryear{Wang, Bendersky, Metzler, and Najork}{Wang
  et~al\mbox{.}}{2016}]%
        {DBLP:conf/sigir/WangBMN16}
\bibfield{author}{\bibinfo{person}{Xuanhui Wang}, \bibinfo{person}{Michael
  Bendersky}, \bibinfo{person}{Donald Metzler}, {and} \bibinfo{person}{Marc
  Najork}.} \bibinfo{year}{2016}\natexlab{}.
\newblock \showarticletitle{Learning to Rank with Selection Bias in Personal
  Search}. In \bibinfo{booktitle}{\emph{SIGIR}}. \bibinfo{pages}{115--124}.
\newblock


\bibitem[\protect\citeauthoryear{Wang, Zhang, Sun, and Qi}{Wang
  et~al\mbox{.}}{2019}]%
        {wang2019doubly}
\bibfield{author}{\bibinfo{person}{Xiaojie Wang}, \bibinfo{person}{Rui Zhang},
  \bibinfo{person}{Yu Sun}, {and} \bibinfo{person}{Jianzhong Qi}.}
  \bibinfo{year}{2019}\natexlab{}.
\newblock \showarticletitle{Doubly robust joint learning for recommendation on
  data missing not at random}. In \bibinfo{booktitle}{\emph{ICML}}.
  \bibinfo{pages}{6638--6647}.
\newblock


\bibitem[\protect\citeauthoryear{Wasilewski and Hurley}{Wasilewski and
  Hurley}{2016}]%
        {DBLP:conf/flairs/WasilewskiH16}
\bibfield{author}{\bibinfo{person}{Jacek Wasilewski} {and}
  \bibinfo{person}{Neil Hurley}.} \bibinfo{year}{2016}\natexlab{}.
\newblock \showarticletitle{Incorporating Diversity in a Learning to Rank
  Recommender System}. In \bibinfo{booktitle}{\emph{FLAIRS}}.
  \bibinfo{pages}{572--578}.
\newblock


\bibitem[\protect\citeauthoryear{Wei, Feng, Chen, Shi, Wu, Yi, and He}{Wei
  et~al\mbox{.}}{2020}]%
        {wei2020model}
\bibfield{author}{\bibinfo{person}{Tianxin Wei}, \bibinfo{person}{Fuli Feng},
  \bibinfo{person}{Jiawei Chen}, \bibinfo{person}{Chufeng Shi},
  \bibinfo{person}{Ziwei Wu}, \bibinfo{person}{Jinfeng Yi}, {and}
  \bibinfo{person}{Xiangnan He}.} \bibinfo{year}{2020}\natexlab{}.
\newblock \showarticletitle{Model-Agnostic Counterfactual Reasoning for
  Eliminating Popularity Bias in Recommender System}.
\newblock \bibinfo{journal}{\emph{arXiv preprint arXiv:2010.15363}}
  (\bibinfo{year}{2020}).
\newblock


\bibitem[\protect\citeauthoryear{Yao and Huang}{Yao and Huang}{2017}]%
        {DBLP:conf/nips/YaoH17}
\bibfield{author}{\bibinfo{person}{Sirui Yao} {and} \bibinfo{person}{Bert
  Huang}.} \bibinfo{year}{2017}\natexlab{}.
\newblock \showarticletitle{Beyond Parity: Fairness Objectives for
  Collaborative Filtering}. In \bibinfo{booktitle}{\emph{NeurIPS}}.
  \bibinfo{pages}{2921--2930}.
\newblock


\bibitem[\protect\citeauthoryear{Yuan, He, Karatzoglou, and Zhang}{Yuan
  et~al\mbox{.}}{2020}]%
        {DBLP:conf/sigir/Yuan0KZ20}
\bibfield{author}{\bibinfo{person}{Fajie Yuan}, \bibinfo{person}{Xiangnan He},
  \bibinfo{person}{Alexandros Karatzoglou}, {and} \bibinfo{person}{Liguang
  Zhang}.} \bibinfo{year}{2020}\natexlab{}.
\newblock \showarticletitle{Parameter-Efficient Transfer from Sequential
  Behaviors for User Modeling and Recommendation}. In
  \bibinfo{booktitle}{\emph{SIGIR}}. \bibinfo{pages}{1469--1478}.
\newblock


\bibitem[\protect\citeauthoryear{Zhang, Feng, Wang, He, Wang, Li, and
  Zhang}{Zhang et~al\mbox{.}}{2020}]%
        {zhang2020retrain}
\bibfield{author}{\bibinfo{person}{Yang Zhang}, \bibinfo{person}{Fuli Feng},
  \bibinfo{person}{Chenxu Wang}, \bibinfo{person}{Xiangnan He},
  \bibinfo{person}{Meng Wang}, \bibinfo{person}{Yan Li}, {and}
  \bibinfo{person}{Yongdong Zhang}.} \bibinfo{year}{2020}\natexlab{}.
\newblock \showarticletitle{How to retrain recommender system? A sequential
  meta-learning method}. In \bibinfo{booktitle}{\emph{SIGIR}}.
  \bibinfo{pages}{1479--1488}.
\newblock


\bibitem[\protect\citeauthoryear{Zheng, Gao, Li, He, Jin, and Li}{Zheng
  et~al\mbox{.}}{2021}]%
        {zheng2020disentangling}
\bibfield{author}{\bibinfo{person}{Yu Zheng}, \bibinfo{person}{Chen Gao},
  \bibinfo{person}{Xiang Li}, \bibinfo{person}{Xiangnan He},
  \bibinfo{person}{Depeng Jin}, {and} \bibinfo{person}{Yong Li}.}
  \bibinfo{year}{2021}\natexlab{}.
\newblock \showarticletitle{Disentangling User Interest and Conformity for
  Recommendation with Causal Embedding}. In \bibinfo{booktitle}{\emph{WWW}}.
\newblock


\end{thebibliography}

%%
%% If your work has an appendix, this is the place to put it.
\appendix
% \end{spacing}
\end{document}